\newcommand{\cmark}{\ding{51}}%
\newcommand{\xmark}{\ding{55}}%
\declaretheorem[numberwithin=section]{thm}
\declaretheorem[sibling=thm]{lemma}
\declaretheorem{assumption}
\declaretheorem[sibling=thm]{definition}
\DeclareRobustCommand{\eg}{e.g.,\@\xspace}
\DeclareRobustCommand{\ie}{i.e.,\@\xspace}
\DeclareRobustCommand{\wrt}{w.r.t.\@\xspace}
\newcommand{\ind}{\mathds{1}}
\newcommand{\de}{\,\mathrm{d}}
\newcommand{\Aspace}{\mathcal{A}}
\newcommand{\Sspace}{\mathcal{S}}
\newcommand{\Hspace}{\mathcal{H}}
\newcommand{\R}{\mathcal{R}}
\newcommand{\uR}{\overline{\mathcal{R}}}
\newcommand{\lR}{\underline{\mathcal{R}}}
\newcommand{\V}{\mathcal{V}}
\newcommand{\cmp}{\mathcal{M}}
\newcommand{\mdp}{\mathcal{M}^R}
\newcommand{\emdp}{\widetilde{\mathcal{M}}^R_T}
\newcommand{\pomdp}{\mathcal{M}^R_\Omega}
\newcommand{\epomdp}{\widetilde{\mathcal{M}}^R_\Omega}
\newcommand{\J}{\mathcal{J}}
\DeclareMathOperator*{\EV}{\mathbb{E}}
\DeclareMathOperator*{\Var}{\mathbb{V}ar}
\newcommand{\Reals}{\mathbb{R}}
\DeclareMathOperator*{\Pinm}{\Pi_{NM}}
\DeclareMathOperator*{\Pim}{\Pi_{M}}
\newcommand{\Pic}[2]{\Pi_{#1}^{#2}}
\DeclareMathOperator*{\argmax}{arg\,max}
\DeclareMathOperator*{\argmin}{arg\,min}
\newcommand{\distset}[2]{\mathcal{D}_{#1}^{#2}}
\DeclareMathOperator*{\NM}{NM}
\DeclareMathOperator*{\M}{M}
\DeclareMathOperator*{\D}{D}
\DeclareMathOperator*{\Pc}{P}
\DeclareMathOperator*{\NP}{NP}
\icmltitlerunning{The Importance of Non-Markovianity in Maximum State Entropy Exploration}
\begin{document}

\twocolumn[
\icmltitle{The Importance of Non-Markovianity in Maximum State Entropy Exploration}



\icmlsetsymbol{equal}{*}

\begin{icmlauthorlist}
\icmlauthor{Mirco Mutti}{equal,poli,uni}
\icmlauthor{Riccardo De Santi}{equal,eth}
\icmlauthor{Marcello Restelli}{poli}
\end{icmlauthorlist}

\icmlaffiliation{poli}{Politecnico di Milano}
\icmlaffiliation{uni}{Universit\`a di Bologna}
\icmlaffiliation{eth}{ETH Zurich}

\icmlcorrespondingauthor{Mirco Mutti}{mirco.mutti@polimi.it}
\icmlcorrespondingauthor{Riccardo De Santi}{rdesanti@ethz.ch}

\icmlkeywords{Machine Learning, ICML}

\vskip 0.3in
]



\printAffiliationsAndNotice{\icmlEqualContribution} 

\begin{abstract}
In the maximum state entropy exploration framework, an agent interacts with a reward-free environment to learn a policy that maximizes the entropy of the expected state visitations it is inducing. \citet{hazan2019maxent} noted that the class of Markovian stochastic policies is sufficient for the maximum state entropy objective, and exploiting non-Markovianity is generally considered pointless in this setting.
In this paper, we argue that non-Markovianity is instead paramount for maximum state entropy exploration in a finite-sample regime. Especially, we recast the objective to target the expected entropy of the induced state visitations in a single trial. Then, we show that the class of non-Markovian deterministic policies is sufficient for the introduced objective, while Markovian policies suffer non-zero regret in general. However, we prove that the problem of finding an optimal non-Markovian policy is NP-hard. Despite this negative result, we discuss avenues to address the problem in a tractable way and how non-Markovian exploration could benefit the sample efficiency of online reinforcement learning in future works.
\end{abstract}

\section{Introduction}
Several recent works have addressed \emph{Maximum State Entropy} (MSE) exploration~\cite{hazan2019maxent, tarbouriech2019active, lee2019smm, mutti2020intrinsically, mutti2020policy, zhang2020exploration, guo2021geometric, liu2021behavior, liu2021aps, seo2021state, yarats2021reinforcement, mutti2021unsupervised, nedergaard2022k} as an objective for unsupevised Reinforcement Learning (RL)~\cite{sutton2018reinforcement}. In this line of work, an agent interacts with a reward-free environment~\cite{jin2020reward} in order to maximize an entropic measure of the state distribution induced by its behavior over the environment, effectively targeting a uniform exploration of the state space. Previous works motivated this MSE objective in two main directions. On the one hand, this learning procedure can be seen as a form of \emph{unsupervised pre-training} of the base model~\cite{laskin2021urlb}, which has been extremely successful in supervised learning~\cite{erhan2009difficulty, erhan2010does, brown2020gpt}. In this view, a MSE policy can serve as an exploratory initialization to standard learning techniques, such as Q-learning~\cite{watkins1992q} or policy gradient~\cite{peters2008reinforcement}, and this has been shown to benefit the sample efficiency of a variety of RL tasks that could be specified over the pre-training environment~\citep[\eg][]{mutti2020policy, liu2021behavior, laskin2021urlb}. On the other hand, pursuing a MSE objective leads to an even coverage of the state space, which can be instrumental to address the \emph{sparse reward discovery} problem~\cite{tarbouriech2020gosprl}. Especially, even when the fine-tuning is slow~\cite{campos2021coverage}, the MSE policy might allow to solve hard-exploration tasks that are out of reach of RL from scratch~\cite{mutti2020policy, liu2021behavior}. As we find these premises fascinating, and of general interest to the RL community, we believe it is worth providing a theoretical reconsideration of the MSE problem. Specifically, we aim to study the minimal class of policies that is necessary to optimize a well-posed MSE objective, and the general complexity of the resulting learning problem.

All of the existing works pursuing a MSE objective solely focus on optimizing Markovian exploration strategies, in which each decision is conditioned on the current state of the environment rather than the full history of the visited states. The resulting learning problem is known to be provably efficient in tabular domains~\cite{hazan2019maxent, zhang2020variational}.
Moreover, this choice is common in RL, as it is well-known that an optimal deterministic Markovian strategy maximizes the usual cumulative sum of rewards objective~\cite{puterman2014markov}. Similarly, \citet[Lemma 3.3]{hazan2019maxent} note that the class of Markovian strategies is \emph{sufficient} for the standard MSE objective. A carefully constructed Markovian strategy is able to induce the same state distribution of any history-based (non-Markovian) one by exploiting randomization. Crucially, this result does not hold only for asymptotic state distributions, but also for state distributions that are marginalized over a finite horizon~\cite{puterman2014markov}. 
Hence, there is little incentive to consider more complicated strategies as they are not providing any benefit on the value of the entropy objective.
\begin{figure}[t]
    \centering
    \includegraphics[width=0.45\textwidth]{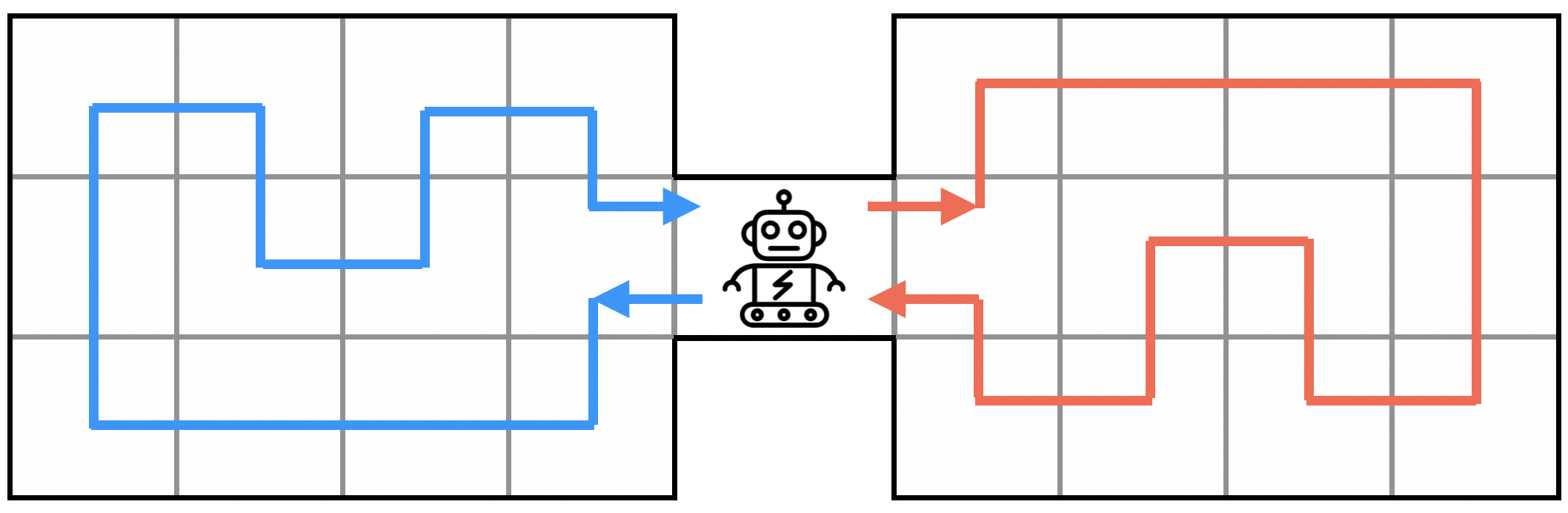}
    \caption{Illustrative two-rooms domain. The agent starts in the middle, colored traces represent optimal strategies to explore the left and the right room.}
    \label{fig:example}
\end{figure}

However, the intuition suggests that exploiting the history of the interactions is useful when the agent's goal is to uniformly explore the environment: If you know what you have visited already, you can take decisions accordingly.
To this point, let us consider an illustrative example in which the agent finds itself in the middle of a two-rooms domain (as depicted in Figure~\ref{fig:example}), having a budget of interactions that is just enough to visit every state within a single episode. It is easy to see that an optimal Markovian strategy for the MSE objective would randomize between going left and right in the initial position, and then would follow the optimal route within a room, finally ending in the initial position again. An episode either results in visiting the left room twice, or the right room twice, or each room once, and all of this outcomes have the same probability. Thus, the agent might explore poorly when considering a single episode, but the exploration is uniform in the average of \emph{infinite trials}. Arguably, this is quite different from how a human being would tackle this problem, \ie taking intentional decisions in the middle position to visit a room before the other. This strategy leads to uniform exploration of the environment in \emph{any trial}, but it is inherently non-Markovian.

Backed by this intuition, we argue that prior work does not recognize the importance of non-Markovianity in MSE exploration due to an hidden infinite-samples assumption in the objective formulation, which is in sharp contrast with the objective function it is actually optimized by empirical methods, \ie the state entropy computed over a finite batch of interactions. In this paper, we introduce a new \emph{finite-sample} MSE objective that is akin to the practical formulation, as it targets the expected entropy of the state visitation frequency induced within an episode instead of the entropy of the expected state visitation frequency over infinite samples. In this finite-sample formulation non-Markovian strategies are crucial, and we believe they can benefit a significant range of relevant applications. For example, collecting task-specific samples might be costly in some real-world domains, and a pre-trained non-Markovian strategy is essential to guarantee quality exploration even in a single-trial setting. In another instance, one might aim to pre-train an exploration strategy for a class of multiple environments instead of a single one. A non-Markovian strategy could exploit the history of interactions to swiftly identify the structure of the environment, then employing the environment-specific optimal strategy thereafter. Unfortunately, learning a non-Markovian strategy is in general much harder than a Markovian one, and we are able to show that it is NP-hard in this setting. Nonetheless, this paper aims to highlight the importance of non-Markovinaity to fulfill the promises of maximum state entropy exploration, thereby motivating the development of tractable formulations of the problem as future work.

The contributions are organized as follows. 
First, in Section~\ref{sec:infinite_samples}, we report a known result~\cite{puterman2014markov} to show that the class of Markovian strategies is sufficient for any infinite-samples MSE objective, including the entropy of the induced marginal state distributions in episodic settings.
Then, in Section~\ref{sec:finite_samples}, we propose a novel finite-sample MSE objective and a corresponding regret formulation. Especially, we prove that the class of non-Markovian strategies is sufficient for the introduced objective, whereas the optimal Markovian strategy suffers a non-zero regret.
However, in Section~\ref{sec:complexity_analysis}, we show that the problem of finding an optimal non-Markovian strategy for the finite-sample MSE objective is NP-hard in general. 
Despite the hardness result, we provide a numerical validation of the theory (Section~\ref{sec:numerical_validation}), and we comment some potential options to address the problem in a tractable way (Section~\ref{sec:conclusions}).
In Appendix~\ref{apx:related_work}, we discuss the related work in the MSE literature, while the missing proofs can be found in Appendix~\ref{apx:proofs}.

\section{Preliminaries}
\label{sec:preliminaries}
In the following, we will denote with $\Delta(\mathcal{X})$ the simplex of a space $\mathcal{X}$, with $[T]$ the set of integers $\{0, \ldots, T - 1 \}$, and with $v \oplus u$ a concatenation of the vectors $v, u$.

\paragraph{Controlled Markov Process}
A Controlled Markov Process (CMP) is a tuple $\cmp := (\Sspace, \Aspace, P, \mu)$, where $\Sspace$ is a finite state space ($|\Sspace| = S$), $\Aspace$ is a finite action space ($|\Aspace| = A$), $P: \Sspace \times \Aspace \to \Delta (\Sspace)$ is the transition model, such that $P(s' | a, s)$ denotes the probability of reaching state $s' \in \Sspace$ when taking action $a \in \Aspace$ in state $s \in \Sspace$, and $\mu \in \Delta (\Sspace)$ is the initial state distribution.

\paragraph{Policies} 
A policy $\pi$ defines the behavior of an agent interacting with an environment modelled by a CMP. It consists of a sequence of decision rules $\pi := (\pi_t)_{t = 0}^\infty$. Each of them is a map between histories $h := (s_{j}, a_{j})_{j = 0}^t \in \Hspace_t$ and actions $\pi_t : \Hspace_t \to \Delta (\mathcal{A})$, such that $\pi_t (a | h)$ defines the conditional probability of taking action $a \in \Aspace$ having experienced the history $h \in \Hspace_t$. We denote as $\Hspace$ the space of the histories of arbitrary length. We denote as $\Pi$ the set of all the policies, and as $\Pi^{\D}$ the set of deterministic policies $\pi = (\pi_t)_{t = 1}^{\infty}$ such that $\pi_t : \Hspace_t \to \Aspace$. We further define:
\begin{itemize}
    \item \emph{Non-Markovian} (NM) policies $\Pinm$, where each $\pi \in \Pinm$ collapses to a single time-invariant decision rule $\pi = (\pi, \pi, \ldots)$ such that $\pi : \Hspace \to \Delta (\mathcal{A})$;
    \item \emph{Markovian} (M) policies $\Pim$, where each $\pi \in \Pim$ is defined through a sequence of Markovian decision rules $\pi = (\pi_t)_{t = 0}^\infty$ such that $\pi_t : \mathcal{S} \to \Delta (\mathcal{A})$. A Markovian policy that collapses into a single time-invariant decision rule $\pi = (\pi, \pi, \ldots)$ is called a \emph{stationary} policy.
\end{itemize}

\paragraph{State Distributions and Visitation Frequency}
A policy $\pi \in \Pi$ interacting with a CMP induces a $t$-step state distribution $d_t^\pi (s) := Pr (s_t = s | \pi)$ over $\Sspace$~\cite{puterman2014markov}. This distribution is described by the temporal relation $d^\pi_t (s) = \int_{\Sspace} \int_{\Aspace} d^\pi_{t-1} (s', a') P (s|s',a') \de s' \de a'$, where $d_t^\pi (\cdot, \cdot) \in \Delta(\Sspace \times \Aspace) $ is the $t$-step state-action distribution.
We call the asymptotic fixed point of this temporal relation the \emph{stationary state distribution} $d_{\infty}^\pi (s) := \lim_{t \to \infty} d_t^{\pi} (s)$, and we denote as $d^\pi_\gamma(s) := (1 - \gamma) \sum^{\infty}_{t=0}\gamma^t d^\pi_t (s)$ its $\gamma$-discounted counterpart, where $\gamma \in (0, 1)$ is the discount factor.
A marginalization of the $t$-step state distribution over a finite horizon $T$, \ie $d_{T}^\pi (s) := \frac{1}{T} \sum_{t \in [T]} d^\pi_t (s)$, is called the \emph{marginal state distribution}. The \emph{state visitation frequency} $d_{h} (s) = \frac{1}{T} \sum_{t \in [T]} \ind (s_t = s | h)$ is a realization of the marginal state distribution, such that $  \EV_{h \sim p^\pi_T} \big[ d_h (s) \big] = d_{T}^\pi (s)$, where the distribution over histories $p^\pi_T \in \Delta (\Hspace_T)$ is defined as $ p^\pi_T (h) = \mu(s_{0}) \prod_{t \in [T - 1]} \pi(a_{t}| h_t) P(s_{t + 1}|a_{t}, s_{t}).$

\paragraph{Markov Decision Process}
A CMP $\cmp$ paired with a reward function $R: \Sspace \times \Aspace \to \Reals$ is called a Markov Decision Process (MDP)~\cite{puterman2014markov} $\mdp := \cmp \cup R$. We denote with $R(s,a)$ the expected immediate reward when taking action $a \in \Aspace$ in $s \in \Sspace$, and with $R (h) = \sum_{t \in [T]} R(s_{t}, a_{t})$ the return over the horizon $T$. The performance of a policy $\pi$ over the MDP $\mdp$ is defined as the \emph{average return} $\mathcal{J}_{\mdp} (\pi) = \EV_{h \sim p^\pi_T} [ R (h)]$, and $\pi_{\mathcal{J}}^* \in \argmax_{\pi \in \Pi} \mathcal{J}_{\mdp} (\pi)$ is called an optimal policy. For any MDP $\mdp$, there always exists a deterministic Markovian policy $\pi \in \Pi_{\M}^{\D}$ that is optimal~\cite{puterman2014markov}.

\paragraph{Extended MDP}
The problem of finding an optimal non-Markovian policy with history-length $T$ in an MDP $\mdp$, \ie  $\pi_{\NM}^* \in \argmax_{\pi \in \Pinm} \J_{\mdp} (\pi)$, can be reformulated as the one of finding an optimal Markovian policy $\pi_{\M}^* \in \argmax_{\pi \in \Pim} \J_{\emdp} (\pi)$ in an extended MDP $\emdp$. The extended MDP is defined as $\emdp := (\widetilde{\Sspace}, \widetilde{\Aspace}, \widetilde{P}, \widetilde{R}, \widetilde{\mu})$, in which $\widetilde{\Sspace} \subseteq \Hspace_{[T]} = \Hspace_{1} \cup \ldots \cup \Hspace_{T}$, and $\widetilde{s} := (\widetilde{s}_0, \ldots, \widetilde{s}_{-1} )$ corresponds to a history in $\mdp$ of length $|\widetilde{s}|$, $\widetilde{\Aspace} = \Aspace$, $\widetilde{P} (\widetilde{s}' | \widetilde{s}, \widetilde{a}) = P (s' = \widetilde{s}'_{-1} | s = \widetilde{s}_{-1}, a = \widetilde{a})$, $\widetilde{R} (\widetilde{s}, \widetilde{a}) = R(s = \widetilde{s}_{-1}, a = \widetilde{a})$, and $\widetilde{\mu} (\widetilde{s}) = \mu (s = \widetilde{s})$ for any $\widetilde{s} \in \widetilde{\Sspace}$ of unit length.

\paragraph{Partially Observable MDP}
A Partially Observable Markov Decision Process (POMDP)~\cite{astrom1965optimal, kaelbling1998planning} is described by $\pomdp := (\Sspace, \Aspace, P, R, \mu, \Omega, O)$, where $\Sspace, \Aspace, P, R, \mu$ are defined as in an MDP, $\Omega$ is a finite observation space, and $O : \Sspace \times \Aspace \to \Delta (\Omega)$ is the observation function, such that $O (o | s', a)$ denotes the conditional probability of the observation $o \in \Omega$ when selecting action $a \in \Aspace$ in state $s \in \Sspace$. Crucially, while interacting with a POMDP the agent cannot observe the state $s \in \Sspace$, but just the observation $o \in \Omega$. The performance of a policy $\pi$ is defined as in an MDP.

\section{Infinite Samples: Non-Markovianity Does Not Matter}
\label{sec:infinite_samples}
Previous works pursuing maximum state entropy exploration of a CMP consider an objective of the kind
\begin{equation}
	\mathcal{E}_{\infty} (\pi) := H \big( d^\pi (\cdot) \big) = - \EV_{s \sim d^\pi} \big[ \log d^\pi (s)\big ],
	\label{eq:infinite_samples_entropy}
\end{equation}
where $d^\pi (\cdot)$ is either a stationary state distribution~\cite{mutti2020intrinsically}, a discounted state distribution~\cite{hazan2019maxent, tarbouriech2019active}, or a marginal state distribution~ \cite{lee2019smm, mutti2020policy}. 
While it is well-known~\cite{puterman2014markov} that there exists an optimal deterministic policy $\pi^* \in \Pic{\M}{\D}$ for the common average return objective $\mathcal{J}_{\mdp}$, it is not pointless to wonder whether the objective in~\eqref{eq:infinite_samples_entropy} requires a more powerful policy class than $\Pim$. \citet[][Lemma 3.3]{hazan2019maxent} confirm that the set of (randomized) Markovian policies $\Pim$ is indeed sufficient for $\mathcal{E}_{\infty}$ defined over asymptotic (stationary or discounted) state distributions. 
In the following theorem and corollary, we report a common MDP result~\cite{puterman2014markov} to show that $\Pim$ suffices for $\mathcal{E}_{\infty}$ defined over (non-asymptotic) marginal state distributions as well.
\begin{restatable}[]{theorem}{distributionsEquivalence}
	Let $x \in \{\infty, \gamma, T \}$, and let $\distset{\NM}{x} = \{ d_x^\pi (\cdot) : \pi \in \Pinm \}$, $\distset{\M}{x} = \{ d_x^\pi (\cdot) : \pi \in \Pim \}$ the corresponding sets of state distributions over a CMP. We can prove that:
	\begin{itemize}
		\item[(i)] The sets of stationary state distributions are equivalent $\distset{\NM}{\infty} \equiv \distset{\M}{\infty}$;
		\item[(ii)] The sets of discounted state distributions are equivalent $\distset{\NM}{\gamma} \equiv \distset{\M}{\gamma}$ for any $\gamma$;
		\item[(iii)] The sets of marginal state distributions are equivalent $\distset{\NM}{T} \equiv \distset{\M}{T}$ for any $T$.
	\end{itemize}
    \label{thr:distributions_equivalence}
\end{restatable}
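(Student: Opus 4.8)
The plan is to reduce all three claims to a single time-wise statement: for every non-Markovian policy $\pi \in \Pinm$ there exists a (generally time-varying) Markovian policy $\pi' \in \Pim$ whose $t$-step state distribution coincides with that of $\pi$, that is $d^{\pi'}_t(s) = d^\pi_t(s)$ for all $s \in \Sspace$ and all $t$. Granting this, each of (i)--(iii) follows immediately, because the stationary, discounted, and marginal distributions are all fixed limiting or linear functionals of the sequence $(d^\pi_t)_t$: the limit $t \to \infty$ yields (i), the geometric average $(1-\gamma)\sum_{t} \gamma^t d^\pi_t$ yields (ii), and the finite average $\frac{1}{T} \sum_{t \in [T]} d^\pi_t$ yields (iii). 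So the whole theorem collapses to one induction plus a trivial inclusion.

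One inclusion is essentially free. A Markovian decision rule $\pi_t : \Sspace \to \Delta(\Aspace)$ can be read as a history-dependent rule that ignores everything in the history but its last state (and recovers the index $t$ from the history length), so every $\pi \in \Pim$ is realizable within $\Pinm$. This gives $\distset{\M}{x} \subseteq \distset{\NM}{x}$ for each $x \in \{\infty, \gamma, T\}$ at once.

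For the reverse inclusion $\distset{\NM}{x} \subseteq \distset{\M}{x}$, I would use the standard marginalization construction. Given $\pi \in \Pinm$, define $\pi'$ by conditioning on the current state and averaging out the history, $\pi'_t(a | s) := d^\pi_t(s,a) / d^\pi_t(s)$ whenever $d^\pi_t(s) > 0$ and arbitrarily otherwise, where $d^\pi_t(\cdot,\cdot)$ is the $t$-step state-action distribution induced by $\pi$. I would then prove $d^{\pi'}_t = d^\pi_t$ by induction on $t$. The base case $t = 0$ is immediate since both equal $\mu$. For the inductive step I would push the hypothesis $d^{\pi'}_t = d^\pi_t$ through the one-step temporal relation, using the definition of $\pi'_t$ to cancel the denominator $d^\pi_t(s)$ and recover exactly $\sum_{s,a} d^\pi_t(s,a) P(s' | s,a) = d^\pi_{t+1}(s')$.

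The step I expect to require the most care is ensuring the construction is well-posed and the induction is airtight: the conditional $\pi'_t(\cdot | s)$ is only meaningful on states with $d^\pi_t(s) > 0$, but this is harmless, since any state carrying zero mass contributes nothing to the one-step propagation, so the arbitrary choice there cannot alter any $d^{\pi'}_{t+1}(s')$. I would also emphasize that $\pi'$ is in general \emph{time-varying}, which is admissible because $\Pim$ contains non-stationary decision rules, and that for (i) matching every $t$-step distribution forces the limits to agree whenever they exist, so no separate fixed-point or stationarity argument is needed.
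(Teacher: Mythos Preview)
Your proposal is correct and follows essentially the same approach as the paper: both establish the trivial inclusion $\distset{\M}{x} \subseteq \distset{\NM}{x}$, then use the marginalization construction $\pi'_t(a \mid s) = d^\pi_t(s,a)/d^\pi_t(s)$ and prove $d^{\pi'}_t = d^\pi_t$ for all $t$ by induction (citing Puterman's Theorem~5.5.1), from which all three cases follow. Your write-up is slightly more careful than the paper's in flagging the zero-mass states and the necessity of time-varying $\pi'$, but the argument is the same.
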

\begin{proof}[Proof Sketch]
	For any non-Markovian policy $\pi \in \Pinm$ inducing distributions $d_t^\pi (\cdot), d_t^\pi (\cdot, \cdot)$ over the states and the state-action pairs of the CMP, we can build a Markovian policy $\pi' \in \Pim, \pi' = (\pi'_t)_{t = 0}^{\infty}$ through the construction $\pi'_t (a | s) = d_t^\pi (s, a) \big/ d_t^\pi (s), \forall s \in \Sspace, \forall a \in \Aspace$. From~\citep[][Theorem 5.5.1]{puterman2014markov} we know that $d_t^\pi (s) = d_t^{\pi'} (s)$ holds for any $t \geq 0$ and $\forall s \in \Sspace$. This implies that $d_{\infty}^{\pi} (\cdot) = d_{\infty}^{\pi'} (\cdot)$, $d_{\gamma}^{\pi} (\cdot) = d_{\gamma}^{\pi'} (\cdot)$, $d_{T}^{\pi} (\cdot) = d_{T}^{\pi'} (\cdot)$, from which $\distset{\NM}{x} \equiv \distset{\M}{x}$ follows. See Appendix~\ref{apx:proofs_infinite_samples} for a detailed proof.
\end{proof}
From the equivalence of the sets of induced distributions, it is straightforward to derive the optimality of Markovian policies for objective~\eqref{eq:infinite_samples_entropy}.
\begin{restatable}[]{corollary}{optimalMarkovDistributions}
	For every CMP, there exists a Markovian policy $\pi^* \in \Pim$ such that $\pi^* \in \argmax_{\pi \in \Pi} \mathcal{E}_{\infty} (\pi)$. 
	\label{thr:optimal_Markov_distributions}
\end{restatable}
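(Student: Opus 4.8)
The plan is to exploit that the objective $\mathcal{E}_{\infty}$ is a functional of the induced state distribution alone: by definition $\mathcal{E}_{\infty}(\pi) = H(d^\pi(\cdot))$ depends on $\pi$ only through $d^\pi_x$, with $x \in \{\infty, \gamma, T\}$ selecting the stationary, discounted, or marginal distribution. Hence maximizing $\mathcal{E}_{\infty}$ over a policy class is equivalent to maximizing the entropy $H$ over the corresponding set of achievable state distributions, and the role of the policy class is entirely mediated by which distributions it can realize.

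First I would reduce the optimization over the full class $\Pi$ to an optimization over achievable distributions. For any $\pi \in \Pi$, the construction used in the proof sketch of Theorem~\ref{thr:distributions_equivalence}, namely $\pi'_t(a|s) = d_t^\pi(s,a)/d_t^\pi(s)$, produces a Markovian $\pi' \in \Pim$ with $d_t^{\pi'}(s) = d_t^\pi(s)$ for all $t$ and $s$; this construction makes no use of time-invariance and therefore applies verbatim to every history-dependent policy, giving $\distset{\Pi}{x} \equiv \distset{\M}{x}$ for each $x$. Consequently $\mathcal{E}_{\infty}(\pi) = \mathcal{E}_{\infty}(\pi')$, and
\[
\sup_{\pi \in \Pi} \mathcal{E}_{\infty}(\pi) = \sup_{d \in \distset{\Pi}{x}} H(d) = \sup_{d \in \distset{\M}{x}} H(d) = \sup_{\pi \in \Pim} \mathcal{E}_{\infty}(\pi).
\]

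Next I would argue that the supremum is attained inside $\Pim$. The set $\distset{\M}{x}$ of realizable state distributions is a nonempty compact subset of the simplex $\Delta(\Sspace)$, being the continuous image of the compact state--action polytope, and the entropy $H$ is continuous on $\Delta(\Sspace)$; hence $H$ achieves its maximum at some $d^\star \in \distset{\M}{x}$. Picking a Markovian policy $\pi^\star \in \Pim$ with $d^{\pi^\star}_x = d^\star$ then yields $\mathcal{E}_{\infty}(\pi^\star) = \max_{d \in \distset{\M}{x}} H(d) = \sup_{\pi \in \Pi} \mathcal{E}_{\infty}(\pi)$, so $\pi^\star \in \argmax_{\pi \in \Pi} \mathcal{E}_{\infty}(\pi)$, which is exactly the claim.

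The only real obstacle is this existence step: everything else is a direct corollary of the distribution-set equivalence already established. For the marginal ($x = T$) and discounted ($x = \gamma$) cases, compactness of $\distset{\M}{x}$ together with continuity of $H$ makes existence immediate. The stationary case ($x = \infty$) is slightly more delicate, since $d_\infty^\pi$ need not exist for arbitrary $\pi$ and multichain phenomena can arise, but the correspondence inherited from Theorem~\ref{thr:distributions_equivalence} and the underlying result of \citet{puterman2014markov} restricts attention to policies whose induced distributions are well defined, so the same compactness and continuity argument closes the gap.
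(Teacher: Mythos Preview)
Your proposal is correct and follows essentially the same approach as the paper: both arguments observe that $\mathcal{E}_{\infty}(\pi)$ depends on $\pi$ only through the induced state distribution and then invoke the distribution-set equivalence of Theorem~\ref{thr:distributions_equivalence} to transfer optimality from $\Pi$ to $\Pim$. You are slightly more careful than the paper in that you explicitly justify existence of the maximizer via compactness of $\distset{\M}{x}$ and continuity of $H$, whereas the paper simply assumes the $\argmax$ over $\Pi$ is attained and maps the optimizer across.
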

As a consequence of Corollary~\ref{thr:optimal_Markov_distributions}, there is little incentive to consider non-Markovian policies when optimizing objective~\eqref{eq:infinite_samples_entropy}, since there is no clear advantage to make up for the additional complexity of the policy.
This result might be unsurprising when considering asymptotic distributions, as one can expect a carefully constructed Markovian policy to be able to tie the distribution induced by a non-Markovian policy in the limit of the interaction steps.
However, it is less evident that a similar property holds for the expectation of final-length interactions alike.
Yet, we were able to show that a Markovian policy that properly exploits randomization can always achieve equivalent state distributions \wrt non-Markovian counterparts. 
Note that state distributions are actually \emph{expected} state visitation frequency, and the expectation practically implies an infinite number of realizations.
In this paper, we show that this underlying infinite-sample regime is the reason why the benefit of non-Markovianity, albeit backed up by intuition, does not matter. Instead, we propose a relevant finite-sample entropy objective in which non-Markovianity is crucial.

\section{Finite Samples: Non-Markovianity Matters}
\label{sec:finite_samples}
In this section, we reformulate the typical maximum state entropy exploration objective of a CMP~\eqref{eq:infinite_samples_entropy} to account for a finite-sample regime. Crucially, we consider the expected entropy of the state visitation frequency rather than the entropy of the expected state visitation frequency, which results in
\begin{equation}
	\mathcal{E} (\pi) := \EV_{h \sim p^\pi_T} \big[ H \big( d_h (\cdot) \big) \big] = - \EV_{h \sim p^\pi_T} \EV_{s \sim d_h} \big[ \log d_h (s) \big].
	\label{eq:finite_samples_entropy}
\end{equation}
We note that $\mathcal{E} (\pi) \leq \mathcal{E}_{\infty} (\pi)$ for any $\pi \in \Pi$, which is trivial by the concavity of the entropy function and the Jensen's inequality.
Whereas~\eqref{eq:finite_samples_entropy} is ultimately an expectation as it is~\eqref{eq:infinite_samples_entropy}, the entropy is not computed over the infinite-sample state distribution $d^\pi_T (\cdot)$ but its finite-sample realization $d_h (\cdot)$. Thus, to maximize $\mathcal{E} (\pi)$ we have to find a policy inducing high-entropy state visits within a single trajectory rather than high-entropy state visits over infinitely many trajectories.
Crucially, while Markovian policies are as powerful as any other policy class in terms of induced state distributions (Theorem~\ref{thr:distributions_equivalence}), this is no longer true when looking at induced trajectory distributions $p^\pi_T$. Indeed, we will show that non-Markovianity provides a superior policy class for objective~\eqref{eq:finite_samples_entropy}. First, we define a performance measure to formally assess this benefit, which we call the \emph{regret-to-go}.\footnote{Note that the entropy function does not enjoy additivity, thus we cannot adopt the usual expected cumulative regret formulation in this setting.}
\begin{definition}[Expected Regret-to-go]
	Consider a policy $\pi \in \Pi$ interacting with a CMP over $T - t$ steps starting from the trajectory $h_t$. We define the expected regret-to-go $\R_{T - t}$, \ie from step $t$ onwards, as
	\begin{equation*}
    		\R_{T - t} (\pi, h_t) = H^* - \EV_{h_{T - t} \sim p^\pi_{T - t}} \big[ H \big( d_{h_t \oplus h_{T - t}} (\cdot) \big) \big],
	\end{equation*}
	where $H^* = \max_{\pi^* \in \Pi} \EV_{h_{T - t}^* \sim p^{\pi^*}_{T-t}} \big[ H \big( d_{h_t \oplus h_{T - t}^*} (\cdot) \big) \big]$ is the expected entropy achieved by an optimal policy $\pi^*$. The term $R_T (\pi)$ denotes the expected regret-to-go of a $T$-step trajectory $h_T$ starting from $s \sim \mu$.
	\label{thr:regret-to-go}
\end{definition}
The intuition behind the regret-to-go is quite simple. Suppose to have drawn a trajectory $h_t$ upon step $t$. If we take the subsequent action with the (possibly sub-optimal) policy $\pi$, by how much would we decrease (in expectation) the entropy of the state visits $H(d_{h_T} (\cdot))$ \wrt an optimal policy $\pi^*$?
In particular, we would like to know how limiting the policy $\pi$ to a specific policy class would affect the expected regret-to-go and the value of $\mathcal{E} (\pi)$ we could achieve. 
The following theorem and subsequent corollary, which constitute the main contribution of this paper, state that an optimal non-Markovian policy suffers zero expected regret-to-go in any case, whereas an optimal Markovian policy suffers non-zero expected regret-to-go in general.
\begin{restatable}[Non-Markovian Optimality]{theorem}{regret}
	For every CMP $\cmp$ and trajectory $h_t \in \Hspace_{[T]}$, there exists a deterministic non-Markovian policy $\pi_{\NM} \in \Pic{\NM}{\D}$ that suffers zero regret-to-go $\R_{T - t} (\pi_{\NM}, h_t) = 0$, whereas for any $\pi_{\M} \in \Pim$ we have $\R_{T - t} (\pi_{\M}, h_t) \geq 0$.
	\label{thr:regret_theorem}
\end{restatable}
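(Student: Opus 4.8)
The plan is to split the theorem into its two halves and handle them separately, since they rely on very different mechanisms. The non-Markovian half is the substantive construction; the Markovian lower bound $\R_{T-t}(\pi_{\M},h_t)\ge 0$ is almost immediate, because $H^*$ is defined as the \emph{maximum} expected entropy over all of $\Pi \supseteq \Pim$, so by definition no policy (Markovian or otherwise) can exceed it, giving a nonnegative regret-to-go. The only thing worth checking for this direction is that the optimum $H^*$ defining the regret is taken over the full policy class $\Pi$ and not just over $\Pim$; given the definition in the statement it is, so this inequality is genuinely trivial.

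For the non-Markovian half, I would first recast the problem using the Extended MDP construction from the Preliminaries. The key observation is that maximizing the expected terminal-entropy $\EV_{h_{T-t}\sim p^\pi_{T-t}}[H(d_{h_t\oplus h_{T-t}}(\cdot))]$ is an \emph{average-return} problem whose reward is collected only at the final step, evaluated on the full history. Concretely, define a reward $\widetilde R$ on $\emdp$ that is zero everywhere except on length-$T$ extended states $\widetilde s$, where it equals $H(d_{\widetilde s}(\cdot))$, the entropy of the empirical state-visitation frequency of the history encoded by $\widetilde s$. Then $\EV_{h_{T-t}\sim p^\pi_{T-t}}[H(d_{h_t\oplus h_{T-t}}(\cdot))]$ is exactly the average return $\J_{\emdp}(\pi)$ of the corresponding Markovian policy on $\emdp$, started from the extended state $\widetilde s = h_t$.

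Once the objective is written as an average-return objective on $\emdp$, I would invoke the standard MDP result quoted in the Preliminaries: for any MDP there exists a deterministic Markovian policy that is optimal. Pulling this optimal deterministic Markovian policy on $\emdp$ back through the Extended-MDP correspondence yields a deterministic non-Markovian policy on $\cmp$, i.e.\ an element of $\Pic{\NM}{\D}$, that attains the maximum expected terminal entropy and therefore achieves $H^*$, giving $\R_{T-t}(\pi_{\NM},h_t)=0$. I should verify that the maximizer over all of $\Pi$ (used to define $H^*$) coincides with the maximizer over the history-length-$T$ non-Markovian class, which holds because the reward depends only on the realized length-$T$ history, so histories longer than $T$ and any extra randomization cannot help.

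The main obstacle I anticipate is making the reduction to $\emdp$ fully rigorous, rather than the optimality argument itself (which is off-the-shelf once the reduction is in place). Two points need care: first, the entropy reward is a \emph{terminal} functional of the entire trajectory, so I must confirm that encoding the running history into the extended state makes this reward Markovian on $\emdp$ and that $d_{\widetilde s}$ is well-defined from $\widetilde s$ alone --- which it is, since $\widetilde s$ \emph{is} the history. Second, the regret-to-go conditions on a prefix $h_t$, so I must set the initial extended state to $\widetilde s = h_t$ and normalize the visitation frequency $d_{h_t\oplus h_{T-t}}$ over the full horizon $T$ (not $T-t$); this means the ``reward'' at the terminal step depends on the fixed prefix as well, which is fine because $h_t$ is held constant throughout. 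After clearing these modeling details, the existence of a deterministic non-Markovian optimizer is a direct consequence of \citet{puterman2014markov}.
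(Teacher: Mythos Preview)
Your proposal is correct. For the non-Markovian half you follow exactly the paper's route: recast the conditional expected-entropy objective as an average-return problem on the extended MDP $\emdp$ with terminal reward $\widetilde R(\widetilde s,\widetilde a)=H(d_{\widetilde s}(\cdot))$ for $|\widetilde s|=T$, invoke the existence of a deterministic Markovian optimum on $\emdp$ \citep{puterman2014markov}, and pull it back to a policy in $\Pic{\NM}{\D}$. This is precisely the content of the paper's Lemma~\ref{thr:optimal_deterministic}, including your care about the fixed prefix $h_t$ becoming the initial extended state.

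The one point of departure is the Markovian half. You argue $\R_{T-t}(\pi_{\M},h_t)\ge 0$ directly from the definition of $H^*$ as a maximum over $\Pi\supseteq\Pim$, which is valid and more elementary. The paper instead appeals to Lemma~\ref{thr:regret_bounds}, which establishes quantitative lower and upper bounds $\lR_{T-t},\uR_{T-t}$ on the regret of an optimal Markovian policy via a law-of-total-variance argument (Lemma~\ref{thr:law_total_variance}). For the bare inequality in Theorem~\ref{thr:regret_theorem} your shortcut suffices; the paper's heavier machinery is really there to deliver the \emph{strict} inequality of Corollary~\ref{thr:sufficient_condition}, which your argument does not (and was not asked to) provide.
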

\begin{restatable}[Sufficient Condition]{corollary}{sufficientCondition}
	For every CMP $\cmp$ and trajectory $h_t \in \Hspace_{[T]}$ for which any optimal Markovian policy $\pi_{\M} \in \Pim$ is randomized (\ie stochastic) in $s_t$, we have strictly positive regret-to-go $\R_{T - t} (\pi_{\M}, h_t) > 0$.
	\label{thr:sufficient_condition}
\end{restatable}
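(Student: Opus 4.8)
The plan is to lift the proof of Theorem~\ref{thr:regret_theorem} into a value-function form and then show that the randomisation certified by the hypothesis is irreconcilable with a zero regret-to-go on the given history. For any history $h$ ending in a state $s$, write $V(h) := \max_{\pi \in \Pi} \EV_{h' \sim p^\pi_{T - |h|}} \big[ H\big( d_{h \oplus h'}(\cdot) \big) \big]$ for the optimal expected entropy attainable from $h$, and $Q(h,a) := \EV_{s' \sim P(\cdot|s,a)} \big[ V\big( h \oplus (a,s') \big) \big]$ for the associated action value, so that $V(h) = \max_a Q(h,a)$ and, by Theorem~\ref{thr:regret_theorem}, $H^* = V(h_t)$ is attained by a deterministic non-Markovian policy. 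Denoting by $V^{\pi_{\M}}, Q^{\pi_{\M}}$ the analogous quantities induced by the Markovian policy $\pi_{\M}$, the first step is to expand the regret-to-go along the initial action as
\begin{equation*}
\R_{T-t}(\pi_{\M}, h_t) = \sum_{a \in \Aspace} \pi_{\M}(a | s_t) \big( V(h_t) - Q^{\pi_{\M}}(h_t, a) \big),
\end{equation*}
which is a sum of non-negative terms since $Q^{\pi_{\M}} \le Q \le V$ pointwise (the inner inequality because $\pi_{\M}$ ranges over a subclass, the outer by definition of the max).

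The second step characterises the equality case. The displayed sum vanishes only if \emph{every} action $a$ in the support of $\pi_{\M}(\cdot | s_t)$ is optimal for $h_t$, that is $a \in A^*(h_t) := \argmax_{a'} Q(h_t, a')$, and in addition the Markovian continuation from each reachable $h_t \oplus (a, s')$ is itself optimal. Hence it suffices to exhibit a single action in the support of $\pi_{\M}(\cdot|s_t)$ lying outside $A^*(h_t)$: such an action forces $Q^{\pi_{\M}}(h_t,a) \le Q(h_t,a) < V(h_t)$ and thus a strictly positive summand.

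The mechanism that produces this action is the strict concavity of the entropy. Distinct actions at $h_t$ steer the trajectory through distinct intermediate states and therefore induce distinct terminal visitation frequencies; strict concavity forbids two genuinely different induced frequencies from both realising the optimum $V(h_t)$, so the optimal action for the specific history $h_t$ is unique, $|A^*(h_t)| = 1$. The standing hypothesis guarantees that $\pi_{\M}(\cdot | s_t)$ has support of size at least two, which cannot be contained in the singleton $A^*(h_t)$. Consequently some support action falls outside $A^*(h_t)$, the corresponding summand is strictly positive, and $\R_{T-t}(\pi_{\M}, h_t) > 0$, establishing Corollary~\ref{thr:sufficient_condition}.

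The hard part will be the uniqueness claim $|A^*(h_t)| = 1$: strict concavity rules out ties in the \emph{entropy value} of two distinct induced frequencies, but it must be paired with the transition structure to exclude the degenerate case in which two different first actions funnel into the \emph{same} optimal continuation, as well as the symmetric configurations in which distinct frequencies happen to share an identical entropy. In those residual cases the initial-action accounting is inconclusive, and the strictly positive contribution must instead be located deeper in the subtree rooted at $h_t$, by observing that the same history-blind distribution $\pi_{\M}(\cdot|s_t)$ is replayed at a later revisited state where the optimal choices for $h_t$'s branch and for the competing branches genuinely disagree; making this propagation precise, rather than the first-step bookkeeping, is where the real work lies.
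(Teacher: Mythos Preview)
Your value-function decomposition is sound, and once $|A^*(h_t)|=1$ is in hand the first-step accounting does yield a strictly positive summand. The gap is exactly where you locate it: uniqueness does \emph{not} follow from strict concavity of entropy. Strict concavity only pins down the global maximiser on the simplex, whereas $V(h_t)$ is a constrained optimum over a finite, non-convex set of reachable visitation vectors; distinct reachable vectors can share the same (sub-maximal) entropy value, and two first actions can funnel into identical continuations in a symmetric CMP. Your fallback of locating the loss ``deeper in the subtree'' is left as an intention rather than an argument, and it is unclear how to carry it through without additional structural assumptions.

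The paper sidesteps this entirely. It \emph{postulates} a unique optimal action per history (Assumption~\ref{ass:unique_action}, with a footnote sketching how to relax it by grouping optimal actions) and then argues via two auxiliary lemmas rather than a dynamic-programming decomposition: Lemma~\ref{thr:law_total_variance} uses the law of total variance to equate $\Var[\mathcal{B}(\pi_{\M}(a^*|s_t))]$ with the variance of the deterministic non-Markovian choice across histories reaching $s_t$, and Lemma~\ref{thr:regret_bounds} delivers the lower bound $\lR_{T-t}(\pi_{\M}) = \frac{H^*-H^*_2}{\pi_{\M}(a^*|s_t)}\cdot\Var[\cdot]$ through a best-case-CMP construction. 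Randomisation of $\pi_{\M}$ forces the variance to be positive and Assumption~\ref{ass:unique_action} forces $H^*>H^*_2$, giving strict positivity. If you adopt Assumption~\ref{ass:unique_action}, your own first-step argument becomes immediate (a support of size at least two cannot fit inside the singleton $A^*(h_t)$); the real divergence is that the paper treats uniqueness as a standing assumption while you attempt, unsuccessfully, to derive it from concavity.
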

The result of Theorem~\ref{thr:regret_theorem} highlights the importance of non-Markovianity for optimizing the finite-sample MSE objective~\eqref{eq:finite_samples_entropy}, as the class of Markovian policies is dominated by the class of non-Markovian policies. Most importantly, Corollary~\ref{thr:sufficient_condition} shows that non-Markovian policies are strictly better than Markovian policies in any CMP of practical interest, \ie those in which any optimal Markovian policy has to be randomized~\cite{hazan2019maxent} in order to maximize~\eqref{eq:finite_samples_entropy}. The intuition behind this result is that a Markovian policy would randomize to make up for the uncertainty over the history, whereas a non-Markovian policy does not suffer from this partial observability, and it can deterministically select an optimal action. Clearly, this partial observability is harmless when dealing with the standard RL objective, in which the reward is fully Markovian and does not depend on the history, but it is instead relevant in the peculiar MSE setting, in which the objective is a concave function of the state visitation frequency. In the following section, we report a sketch of the derivation underlying Theorem~\ref{thr:regret_theorem} and Corollary~\ref{thr:sufficient_condition}, while we refer to the Appendix~\ref{apx:proofs_finite_samples} for complete proofs.

\subsection{Regret Analysis}

To the purpose of the regret analysis, we will consider the following assumption to ease the notation.\footnote{Note that this assumption could be easily removed by partitioning the action space in $h_t$ as $\mathcal{A} (h_t) = \mathcal{A}_{opt} (h_t) \cup \mathcal{A}_{sub-opt} (h_t)$, such that $\mathcal{A}_{opt} (h_t)$ are optimal actions and $\mathcal{A}_{sub-opt} (h_t)$ are sub-optimal, and substituting any term $\pi (a^* | h_t)$ with $\sum_{a \in \mathcal{A}_{opt} (h_t)} \pi (a | h_t)$ in the results.} 
\begin{assumption}[Unique Optimal Action]
	For every CMP $\cmp$ and trajectory $h_{t} \in \Hspace_{[T]}$, there exists a unique optimal action $a^*\in \Aspace$ \wrt the objective~\eqref{eq:finite_samples_entropy}.
	\label{ass:unique_action}
\end{assumption}
First, we show that the class of deterministic non-Markovian policies is sufficient for the minimization of the regret-to-go, and thus for the maximization of~\eqref{eq:finite_samples_entropy}.
\begin{restatable}[]{lemma}{optimalDeterministic}
	For every CMP $\mathcal{M}$ and trajectory $h_t \in \Hspace_{[T]}$, there exists a deterministic non-Markovian policy $\pi_{\NM} \in \Pic{\NM}{\D}$ such that $\pi_{\NM} \in \argmax_{\pi \in \Pinm} \mathcal{E} (\pi)$, which suffers zero regret-to-go $\R_{T - t} (\pi_{\NM}, h_t) = 0$.
    \label{thr:optimal_deterministic}
\end{restatable}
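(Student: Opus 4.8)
The plan is to reduce the maximization of the finite-sample objective $\mathcal{E}(\pi)$ over the non-Markovian class $\Pinm$ to a standard finite-horizon control problem on the extended MDP $\emdp$, whose states $\widetilde{s}$ are histories $h \in \Hspace_{[T]}$. The crucial observation is that, although $H(d_h(\cdot))$ is a concave and non-additive functional of the visitation frequency, once we pass to the extended state space it becomes a \emph{deterministic function of the terminal state} $\widetilde{s}_T = h_T$, since $d_{h_T}(\cdot)$ is fully determined by $h_T$. I would therefore equip $\emdp$ with the terminal reward $\widetilde{R}(\widetilde{s}) = H(d_{\widetilde{s}}(\cdot))$ for $|\widetilde{s}| = T$ and zero otherwise, so that $\J_{\emdp}(\pi) = \EV_{h \sim p^\pi_T}[H(d_h(\cdot))] = \mathcal{E}(\pi)$. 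This sidesteps the non-additivity issue flagged in the footnote: the entropy is never decomposed into per-step contributions, it is simply ``paid out'' once, at the horizon.

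First I would argue that optimizing over $\Pinm$ loses nothing relative to $\Pi$. Since a history $h$ encodes its own length $|h| = t$, a single time-invariant decision rule $\pi : \Hspace \to \Delta(\Aspace)$ can realize any time-dependent history-dependent rule up to horizon $T$, so $\max_{\pi \in \Pinm} \mathcal{E}(\pi) = \max_{\pi \in \Pi} \mathcal{E}(\pi) = H^*$, with $H^*$ as in Definition~\ref{thr:regret-to-go}. Next I would invoke the standard finite-horizon result that $\emdp$ admits a deterministic Markovian policy obtained by backward induction on the optimal value function $V^*(\widetilde{s}) := \max_\pi \EV[\,\widetilde{R}(\widetilde{s}_T) \mid \text{start at } \widetilde{s}\,]$. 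Because $\widetilde{\Sspace} \subseteq \Hspace_{[T]}$, a Markovian decision rule on $\emdp$ is by construction a history-dependent (hence non-Markovian) decision rule on the CMP, so the resulting policy is a deterministic $\pi_{\NM} \in \Pic{\NM}{\D}$.

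Finally, I would translate optimality into the regret-to-go statement. The backward-induction policy is optimal not merely from the initial distribution but simultaneously from \emph{every} extended state $\widetilde{s} = h_t$ at every step $t$, which is exactly the Bellman optimality principle: $V^{\pi_{\NM}}(h_t) = V^*(h_t)$ for all $h_t$. Since $V^*(h_t)$ equals the $H^*$ appearing in $\R_{T-t}(\cdot, h_t)$, and $V^{\pi_{\NM}}(h_t) = \EV_{h_{T-t} \sim p^{\pi_{\NM}}_{T-t}}[H(d_{h_t \oplus h_{T-t}}(\cdot))]$, we conclude $\R_{T-t}(\pi_{\NM}, h_t) = 0$ for every $h_t \in \Hspace_{[T]}$.

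The main obstacle I anticipate is conceptual rather than computational: one has to justify that the reduction to an ordinary reward-maximization problem is legitimate despite the objective being a concave functional of the visitation frequency. The resolution hinges entirely on the terminal-reward encoding on the extended state space, after which all the standard machinery (existence of deterministic optimal policies, per-state optimality via dynamic programming) applies verbatim. A secondary point to check carefully is that it is precisely the \emph{per-state} optimality of the backward-induction solution that delivers zero regret-to-go from an \emph{arbitrary} $h_t$, and not merely from $s \sim \mu$; optimality in expectation from the start alone would be insufficient for Definition~\ref{thr:regret-to-go}.
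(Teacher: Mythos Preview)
Your proposal is correct and follows essentially the same route as the paper: build the extended MDP $\emdp$ with terminal reward $\widetilde{R}(\widetilde{s},\widetilde{a}) = H(d_{\widetilde{s}}(\cdot))$ for $|\widetilde{s}|=T$ and zero otherwise, invoke the existence of a deterministic Markovian optimum on $\emdp$, and map it back to a deterministic non-Markovian policy on $\cmp$. Your explicit appeal to per-state Bellman optimality to obtain $\R_{T-t}(\pi_{\NM},h_t)=0$ for \emph{every} $h_t$ is in fact slightly more careful than the paper's brief ``straightforward'' remark, but the argument is the same.
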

\begin{proof}
	The result $\R_{T - t} (\pi_{\NM}, h_t) = 0$ is straightforward by noting that the set of non-Markovian policies $\Pinm$ with arbitrary history-length is as powerful as the general set of policies $\Pi$. To show that there exists a deterministic $\pi_{\NM}$, we consider the extended MDP $\emdp$ obtained from the CMP $\cmp$ as in Section~\ref{sec:preliminaries}, in which the extended reward function is $\widetilde{R} (\widetilde{s}, \widetilde{a}) = H( d_{\widetilde{s}} (\cdot) )$ for every $\widetilde{a} \in \widetilde{\Aspace}$, and every $\widetilde{s} \in \widetilde{\Sspace}$ such that $|\widetilde{s}| = T$, and $\widetilde{R} (\widetilde{s}, \widetilde{a}) = 0 $ otherwise. Since a Markovian policy $\widetilde{\pi}_{\M} \in \Pic{\M}{\D}$ on $\emdp$ can be mapped to a non-Markovian policy $\pi_{\NM} \in \Pic{\NM}{\D}$ on $\cmp$, and it is well-known~\cite{puterman2014markov} that for any MDP there exists an optimal deterministic Markovian policy, we have that $\widetilde{\pi}_{\M} \in \argmax_{\pi \in \Pim} \mathcal{J}_{\emdp} (\pi) $ implies $\pi_{\NM} \in \argmax_{\pi \in \Pinm} \mathcal{E} (\pi)$.
\end{proof}
Then, in order to prove that the class of non-Markovian policies is also necessary for regret minimization, it is worth showing that Markovian policies can instead rely on randomization to optimize objective~\eqref{eq:finite_samples_entropy}.
\begin{restatable}[]{lemma}{lawTotalVariance}
	Let $\pi_{\NM} \in \Pic{\NM}{\D}$ a non-Markovian policy such that $\pi_{\NM} \in \argmax_{\pi \in \Pi} \mathcal{E} (\pi)$ on a CMP $\cmp$. For a fixed history $h_t \in \Hspace_t$ ending in state $s$, the variance of the event of an optimal Markovian policy $\pi_{\M} \in \argmax_{\pi \in \Pim} \mathcal{E} (\pi)$ taking $a^* = \pi_{\NM} (h_t)$ in $s$ is given by
	\begin{equation*}
		\Var \big[ \mathcal{B} ( \pi_{\M} (a^*|s, t) ) \big] = \Var_{hs \sim p^{\pi_{\NM}}_t} \big[ \EV \big[ \mathcal{B} ( \pi_{\NM} (a^* | hs) ) \big] \big],
	\end{equation*}
	where $hs \in \Hspace_{t}$ is any history of length $t$ such that the final state is $s$, \ie $hs := (h_{t - 1} \in \Hspace_{t - 1}) \oplus s$, and $\mathcal{B} (x)$ is a Bernoulli with parameter $x$.
    \label{thr:law_total_variance}
\end{restatable}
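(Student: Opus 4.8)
The tool here is the \emph{law of total variance}, applied with the full history as the fine-grained conditioning variable and the final state $s$ as the coarse one. Concretely, fix the time $t$ and the final state $s$, and take as the source of randomness a history $hs \sim p^{\pi_{\NM}}_t$ conditioned to terminate in $s$. I would define the indicator random variable $X := \ind\big( \pi_{\NM}(hs) = a^* \big)$, \ie the event that the deterministic non-Markovian policy selects the reference action $a^* = \pi_{\NM}(h_t)$ along the realized history. Because $\pi_{\NM}$ is deterministic, $X$ is a deterministic function of $hs$, so it coincides with $\mathcal{B}(\pi_{\NM}(a^*|hs))$, whose parameter lies in $\{0,1\}$ and whose conditional law given the whole history is degenerate.

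First I would invoke the decomposition
\begin{equation*}
  \Var[X] = \EV_{hs}\big[ \Var[X \mid hs] \big] + \Var_{hs}\big[ \EV[X \mid hs] \big].
\end{equation*}
The inner conditional variance vanishes: conditioning on the entire history pins down the action chosen by $\pi_{\NM}$, so $\Var[X \mid hs] = \Var\big[ \mathcal{B}(\pi_{\NM}(a^*|hs)) \big] = 0$ by determinism. Hence the first summand is zero, and what remains is the between-history term $\Var_{hs \sim p^{\pi_{\NM}}_t}\big[ \EV[\mathcal{B}(\pi_{\NM}(a^*|hs))] \big]$, which is exactly the right-hand side of the claim.

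It then remains to identify the left-hand side $\Var[X]$ with $\Var[\mathcal{B}(\pi_{\M}(a^*|s,t))]$. Since $X$ is $\{0,1\}$-valued it is itself a Bernoulli, with marginal parameter $\EV_{hs}[X] = \EV_{hs \sim p^{\pi_{\NM}}_t}[\pi_{\NM}(a^*|hs)]$, \ie the history-averaged probability of selecting $a^*$ among trajectories ending in $s$. By the marginalization construction of Theorem~\ref{thr:distributions_equivalence}, the Markovian decision rule reproducing the state--action occupancy of $\pi_{\NM}$ is $\pi_{\M}(a|s,t) = d_t^{\pi_{\NM}}(s,a)\big/ d_t^{\pi_{\NM}}(s)$, and expanding the occupancies as sums over histories shows that $d_t^{\pi_{\NM}}(s,a^*)/d_t^{\pi_{\NM}}(s)$ equals the conditional expectation $\EV_{hs}[\pi_{\NM}(a^*|hs) \mid s_t = s]$. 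Thus the marginal parameter of $X$ is precisely $\pi_{\M}(a^*|s,t)$, giving $\Var[X] = \Var[\mathcal{B}(\pi_{\M}(a^*|s,t))]$ and closing the chain.

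The step deserving the most care, and the main obstacle, is this last identification: I must argue that an optimal Markovian policy $\pi_{\M} \in \argmax_{\pi \in \Pim} \mathcal{E}(\pi)$ can be taken to have exactly these marginalized action probabilities, rather than merely agreeing with $\pi_{\NM}$ on the marginal state distribution. This is where Theorem~\ref{thr:distributions_equivalence} is load-bearing, since it certifies that the marginalized rule is a legitimate Markovian policy reproducing the same occupancy; I would pair this with the optimality of $\pi_{\NM}$ over $\Pi$ to conclude that the marginalization is an optimal element of $\Pim$ for $\mathcal{E}$. Everything else reduces to the mechanical application of the law of total variance, with the finiteness of $\Sspace$, $\Aspace$ and the history space ensuring all expectations are finite sums and no measurability issues arise.
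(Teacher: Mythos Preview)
Your core argument---apply the law of total variance with the history $hs$ as the conditioning variable, then use the determinism of $\pi_{\NM}$ (via Lemma~\ref{thr:optimal_deterministic}) to zero out the expected conditional variance term---is exactly the paper's route. You are in fact tidier than the paper in fixing a single well-defined random variable $X$ before invoking the decomposition; the paper's writeup lets the variable $A$ be ``distributed according to the $\mathcal{P}$ that maximizes the objective given the conditioning,'' so that $A$ is governed by $\pi_{\M}$ on the left and by $\pi_{\NM}$ on the right, which reads as an informal shortcut rather than a clean application of total variance.

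The step you correctly flag as the main obstacle, however, has a genuine gap in the form you propose. You appeal to Theorem~\ref{thr:distributions_equivalence} to argue that the marginalized rule $\pi'(a\mid s,t) = d_t^{\pi_{\NM}}(s,a)/d_t^{\pi_{\NM}}(s)$ is an optimal element of $\Pim$ for $\mathcal{E}$. But Theorem~\ref{thr:distributions_equivalence} only certifies that $\pi'$ reproduces the \emph{marginal state distribution} $d_T^{\pi_{\NM}}$; it says nothing about the trajectory distribution $p^{\pi'}_T$, and $\mathcal{E}$ is a function of $p^\pi_T$, not of $d^\pi_T$. Indeed, the whole thrust of Section~\ref{sec:finite_samples} is precisely that matching $d^\pi_T$ does \emph{not} suffice to match $\mathcal{E}$. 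Hence from $\mathcal{E}(\pi_{\NM}) = \max_{\pi \in \Pi}\mathcal{E}(\pi)$ together with $d^{\pi'}_T = d^{\pi_{\NM}}_T$ you cannot infer $\mathcal{E}(\pi') = \max_{\pi \in \Pim}\mathcal{E}(\pi)$: some other Markovian policy could in principle beat $\pi'$ on $\mathcal{E}$ while still falling short of $\pi_{\NM}$. The paper does not close this identification either---it simply asserts the left-hand side equals $\Var[\mathcal{B}(\pi_{\M}(a^*\mid s,t))]$---so once you drop the incorrect appeal to Theorem~\ref{thr:distributions_equivalence}, your argument and the paper's are at the same level of rigor on this point.
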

\begin{proof}[Proof Sketch]
	We can prove the result through the Law of Total Variance (LoTV)~\citep[see][]{bertsekas2002introduction}, which gives
	\begin{align*} 
		\Var \big[ \mathcal{B} ( \pi_{\M} (a^*|s, t) ) \big] = \EV_{hs \sim p^{\pi_{\NM}}_t} \big[ \Var \big[ \mathcal{B} ( \pi_{\NM} (a^* | hs) ) \big] \big] \\ + \Var_{hs \sim p^{\pi_{\NM}}_t} \big[ \EV \big[ \mathcal{B} ( \pi_{\NM} (a^* | hs) ) \big] \big], \qquad \forall s \in \Sspace.
	\end{align*}
	Then, exploiting the determinism of $\pi_{\NM}$ (through Lemma~\ref{thr:optimal_deterministic}), it is straightforward to see that $\EV_{hs \sim p^{\pi_{\NM}}_t} \big[ \Var \big[ \mathcal{B} ( \pi_{\NM} (a^* | hs) ) \big] \big] = 0$, which concludes the proof.\footnote{Note that the determinism of $\pi_{\NM}$ does not also imply $\Var_{hs \sim p^{\pi_{\NM}}_t} \big[ \EV \big[ \mathcal{B} ( \pi_{\NM} (a^* | hs) ) \big] \big] = 0$, as the optimal action $\overline{a} = \pi_{\NM} (hs)$ may vary for different histories, which results in the inner expectations $\EV \big[ \mathcal{B} ( \pi_{\NM} (a^* | hs) ) \big]$ being either $1$ or $0$. }
\end{proof}
Unsurprisingly, Lemma~\ref{thr:law_total_variance} shows that, whenever the optimal strategy for~\eqref{eq:finite_samples_entropy} (\ie the non-Markovian $\pi_{\NM}$) requires to adapt its decision in a state $s$ according to the history that led to it ($hs$), an optimal Markovian policy for the same objective (\ie $\pi_{\M}$) must necessarily be randomized.
This is crucial to prove the following result, which establishes lower and upper bounds $\lR_{T - t}, \uR_{T - t}$ to the expected regret-to-go of any Markovian policy that optimizes~\eqref{eq:finite_samples_entropy}.
\begin{restatable}[]{lemma}{regretBounds}
    \label{thr:regret_bounds}
	Let $\pi_{\M}$ be an optimal Markovian policy $\pi_{\M} \in \argmax_{\pi \in \Pim} \mathcal{E} (\pi)$ on a CMP $\cmp$. For any $h_t \in \Hspace_{[T]}$, it holds $\lR_{T - t} (\pi_{\M}) \leq \R_{T - t} (\pi_{\M}) \leq \uR_{T - t} (\pi_{\M})$ such that
	\begin{align*}
		\lR_{T - t} (\pi_{\M}) &= \frac{ H^* - H^*_2  }{ \pi_{\M} (a^* | s_t)} \Var_{hs_t \sim p^{\pi_{\NM}}_t} \big[ \EV \big[ \mathcal{B} ( \pi_{\NM} (a^* | hs_t) ) \big]  \big], \\
		\uR_{T - t} (\pi_{\M}) &= \frac{ H^* - H_*  }{ \pi_{\M} (a^* | s_t)} \Var_{hs_t \sim p^{\pi_{\NM}}_t} \big[ \EV \big[ \mathcal{B} ( \pi_{\NM} (a^* | hs_t) ) \big]  \big],
	\end{align*}
	where $\pi_{\NM} \in \argmax_{\pi \in \Pic{\NM}{\D}} \mathcal{E} (\pi)$, and $H_*, H^*_2$ are given by
	\begin{align*}
		&H_* = \min_{h \in \Hspace_{T - t}} H(d_{h_t \oplus h} (\cdot)), \\
		&H^*_2 = \max_{h \in \Hspace_{T - t} \setminus \Hspace_{T - t}^*} H(d_{h_t \oplus h} (\cdot)) \\
		&s. t. \; \Hspace_{T - t}^* = \argmax_{h \in \Hspace_{T - t}} H(d_{h_t \oplus h} (\cdot)).
	\end{align*}
\end{restatable}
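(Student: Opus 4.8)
The plan is to begin from Definition~\ref{thr:regret-to-go}, writing the regret-to-go of an optimal Markovian policy as
\[
\R_{T-t}(\pi_{\M}) = H^* - \EV_{h_{T-t} \sim p^{\pi_{\M}}_{T-t}}\big[ H(d_{h_t \oplus h_{T-t}}(\cdot)) \big],
\]
which is nonnegative since $\pi_{\M}$ cannot exceed the expected entropy $H^*$ of the optimal policy. I would then condition on the first action that $\pi_{\M}$ selects in $s_t$, splitting the continuations into those that begin with the optimal action $a^*$ and those that do not. By Assumption~\ref{ass:unique_action}, $a^*$ is the unique optimal action, so every continuation not starting with $a^*$ necessarily lies in $\Hspace_{T-t} \setminus \Hspace_{T-t}^*$ and therefore attains entropy at most $H^*_2$; conversely, an optimal continuation must begin with $a^*$. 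This is exactly the structure that lets me sandwich the regret between a multiple of $H^*-H^*_2$ and a multiple of $H^*-H_*$, with the multiplier being the probability $1-\pi_{\M}(a^*\mid s_t)$ that $\pi_{\M}$ fails to play $a^*$.

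To put the bounds in the stated form I would invoke Lemma~\ref{thr:law_total_variance}: the quantity $\pi_{\M}(a^*\mid s_t)=:p$ parametrizes a Bernoulli of variance $p(1-p)$, which the lemma equates to $\Var_{hs_t \sim p^{\pi_{\NM}}_t}\big[\EV[\mathcal{B}(\pi_{\NM}(a^*\mid hs_t))]\big]$. Dividing this identity by $p$ recovers $(1-p) = \tfrac{1}{\pi_{\M}(a^*\mid s_t)}\Var_{hs_t}[\cdots]$, precisely the factor multiplying $H^*-H^*_2$ and $H^*-H_*$ in $\lR_{T-t}$ and $\uR_{T-t}$. The lower bound then follows by discarding the nonnegative contribution of the $a^*$-branch and bounding the shortfall on the complementary branch from below: since those continuations are non-optimal, their entropy is at most $H^*_2$, so each incurs a shortfall of at least $H^*-H^*_2$, and this happens with probability $1-p$.

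The upper bound is where I expect the \emph{main obstacle}. There I must argue that selecting $a^*$ in $s_t$ contributes no shortfall, i.e. that after committing to the optimal action the policy attains $H^*$; only then does the loss concentrate entirely on the $(1-p)$-probability branch, where the shortfall is at most $H^*-H_*$ because every continuation satisfies $H \ge H_*$. This step is delicate for two reasons: a Markovian policy keeps randomizing at the downstream states and need not reproduce the optimal continuation of the specific realized history, and $H^*$ is defined as an \emph{expected} entropy whereas $\Hspace_{T-t}^*$, $H_*$ and $H^*_2$ are defined over individual realized histories. I would reconcile these either by an induction on the residual horizon $T-t$—propagating the zero-regret-along-the-optimal-branch property backward through the transition that follows $a^*$—or by first pinning the claim at the terminal decision step, where $\Hspace_{T-t}=\Hspace_1$ and playing $a^*$ manifestly yields the optimal entropy, and then lifting it. Closing this gap completes the sandwich and yields the two bounds.
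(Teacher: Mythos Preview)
Your approach is essentially the paper's: decompose the expected entropy by conditioning on whether $\pi_{\M}$ plays $a^*$ in $s_t$, bound the sub-optimal branch above by $H^*_2$ and below by $H_*$, and then use Lemma~\ref{thr:law_total_variance} together with the Bernoulli variance identity $p(1-p)$ to rewrite $(1-\pi_{\M}(a^*\mid s_t))$ in the stated variance form. The only cosmetic difference is that the paper packages the two branch estimates into separate auxiliary lemmas (a ``best-case CMP'' and a ``worst-case CMP'' construction, Lemmas~\ref{thr:best_case_cmp} and~\ref{thr:worst_case_cmp}) rather than arguing directly from Assumption~\ref{ass:unique_action} as you do.

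Regarding the obstacle you flag for the upper bound---that after committing to $a^*$ the Markovian policy should still attain $H^*$, even though it keeps randomizing downstream---the paper does \emph{not} close this gap via the induction or terminal-step argument you outline. It simply writes
\[
\R_{T-t}(\pi_{\M}) \;\leq\; H^* - \pi_{\M}(a^*\mid s_t)\,H^* - \big(1-\pi_{\M}(a^*\mid s_t)\big)\,H_*
\]
by appealing to the worst-case CMP lemma, effectively assigning $H^*$ to the $a^*$-branch without further justification. So your instinct that this step is delicate is correct, but the additional argument you propose would go beyond what the paper itself supplies; to match the paper you can simply invoke the same best/worst-case reasoning and move on.
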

\begin{proof}[Proof Sketch]
	The crucial idea to derive lower and upper bounds to the regret-to-go is to consider the impact of a sub-optimal action in the best-case and the worst-case CMP respectively (see Lemma~\ref{thr:best_case_cmp},~\ref{thr:worst_case_cmp}). This gives $\R_{T - t} (\pi_{\M}) \geq H^* - \pi_{\M} (a^* | s_t) H^* - \big( 1 - \pi_{\M} (a^* | s_t) \big) H_2^*$ and $\R_{T - t} (\pi_{\M}) \leq H^* - \pi_{\M} (a^* | s_t) H^* - \big( 1 - \pi_{\M} (a^* | s_t) \big) H_*$. Then, with Lemma~\ref{thr:law_total_variance} we get $\Var \big[ \mathcal{B} ( \pi_{\M} (a^*|s_t) ) \big] = \pi_{\M} (a^* | s_t) \big( 1 - \pi_{\M} (a^* | s_t) \big) = \Var_{hs \sim p^{\pi_{\NM}}_t} \big[ \EV \big[ \mathcal{B} ( \pi_{\NM} (a^* | hs_t) ) \big] \big]$, which concludes the proof.
\end{proof}
Finally, the result in Theorem~\ref{thr:regret_theorem} is a direct consequence of Lemma~\ref{thr:regret_bounds}. Note that the upper and lower bounds on the regret-to-go are strictly positive whenever $\pi_{\M} (a^* | s_t) < 1$, as it is stated in Corollary~\ref{thr:sufficient_condition}.

\section{Complexity Analysis}
\label{sec:complexity_analysis}
Having established the importance of non-Markovianity in dealing with MSE exploration in a finite-sample regime, it is worth considering how hard it is to optimize the objective~\ref{eq:finite_samples_entropy} within the class of non-Markovian policies. Especially, we aim at characterizing the complexity of the problem:
\begin{equation*}
	\Psi_0 := \underset{\pi \in \Pinm}{\text{maximize}} \; \mathcal{E} (\pi),
\end{equation*}
defined over a CMP $\cmp$. Before going into the details of the analysis, we provide a couple of useful definitions for the remainder of the section, whereas we leave to~\cite{arora2009complexity} an extended review of complexity theory.
\begin{definition}[Many-to-one Reductions]
	We denote as $A \le_m B$ a many-to-one reduction from $A$ to $B$.
\end{definition}
\begin{definition}[Polynomial Reductions]
	We denote as $A \le_p B$ a polynomial-time (Turing) reduction from $A$ to $B$.
\end{definition}
Then, we recall that $\Psi_0$ can be rewritten as the problem of finding a reward-maximizing Markovian policy, \ie $\widetilde{\pi}_{\M} \in \argmax_{\pi \in \Pim} \mathcal{J}_{\emdp} (\pi)$, over a convenient extended MDP $\emdp$ obtained from CMP $\cmp$ (see the proof of Lemma~\ref{thr:optimal_deterministic} for further details). We call this problem $\widetilde{\Psi}_0$ and we note that $\widetilde{\Psi}_0 \in \Pc$, as the problem of finding a reward-maximizing Markovian policy is well-known to be in $\Pc$ for any MDP~\cite{papadimitriou1987complexity}. However, the following lemma shows that it does not exist a many-to-one reduction from $\Psi_0$ to $\widetilde{\Psi}_0$.
\begin{restatable}[]{lemma}{noReduction}
    A reduction $\Psi_0 \le_m \widetilde{\Psi}_0$ does not exist.
    \label{thr:no_reduction}
\end{restatable}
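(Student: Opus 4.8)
The plan is to establish non-existence through a size argument on the output of any candidate reduction, deliberately avoiding any appeal to the (subsequent) NP-hardness of $\Psi_0$, so that the statement holds unconditionally rather than only modulo $\Pc \neq \NP$. Recall from the proof of Lemma~\ref{thr:optimal_deterministic} that the canonical way to recast $\Psi_0$ as an instance of $\widetilde{\Psi}_0$ is the extended-MDP construction, whose state space is $\widetilde{\Sspace} \subseteq \Hspace_{[T]}$. First I would lower-bound the number of reachable histories, showing $|\Hspace_{[T]}| = \Omega(c^T)$ for a constant $c > 1$ governed by the branching of $\cmp$, so that $|\emdp|$ is exponential in the horizon $T$ and hence superpolynomial in the description length $|\cmp|$. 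Since a many-to-one reduction $\le_m$ is by definition computable in polynomial time, its output $f(\cmp)$ must satisfy $|f(\cmp)| \le \mathrm{poly}(|\cmp|)$; this immediately excludes the canonical reduction, which would have to write down $\emdp$ in full.

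The substantive step is to rule out every other many-to-one reduction, i.e. to argue that the blow-up is intrinsic and not an artefact of the extended-MDP encoding. I would carry this out at the level of the solution. A solution to $\widetilde{\Psi}_0$ on any MDP $M' = f(\cmp)$ is a Markovian policy over $M'$, whose explicit description has size $O(|M'| \log A) \le \mathrm{poly}(|\cmp|)$; composing with the polynomial-time solution decoder of the reduction still yields an object of size at most $\mathrm{poly}(|\cmp|)$. A solution to $\Psi_0$, by contrast, is a deterministic non-Markovian policy $\pi_{\NM} \in \Pic{\NM}{\D}$ specified over the domain $\Hspace_{[T]}$, whose explicit representation has size $\Omega(|\Hspace_{[T]}|)$, i.e. superpolynomial in $|\cmp|$. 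A polynomial-time decoder cannot emit an output of such length, which is the desired contradiction.

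To make the last point airtight I would exhibit a family of CMPs on which the optimal non-Markovian policy genuinely cannot be compressed below $\Omega(|\Hspace_{[T]}|)$. Leaning on Theorem~\ref{thr:regret_theorem} and Corollary~\ref{thr:sufficient_condition}, the optimal action at a state $s_t$ depends on the visitation frequency accumulated along the history reaching $s_t$, and one can engineer the transition structure so that exponentially many histories ending in a common state induce pairwise distinct optimal continuations. On such instances no merging of histories is admissible, so any MDP whose optimal Markovian policy decodes to an optimal $\pi_{\NM}$ must retain $\Omega(|\Hspace_{[T]}|)$ distinct states, again forcing $|f(\cmp)|$ to be superpolynomial.

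I expect this incompressibility step to be the main obstacle, since it is exactly where one must guard against a reduction that outputs a succinct (non-tabular) representation of $\pi_{\NM}$. The clean resolution is to fix the standard tabular representation of policies, under which the size argument above is unconditional; permitting arbitrary succinct encodings would instead collapse the claim into the conditional statement that a reduction can exist only if $\Pc = \NP$, which is precisely what the explicit-representation convention lets us sidestep.
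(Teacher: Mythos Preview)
Your opening move---lower-bounding $|\widetilde{\Sspace}|$ and observing that the canonical extended-MDP encoding is exponential in $|\cmp|$---is the paper's \emph{entire} argument. The paper simply notes that $|\widetilde{\Sspace}| = S^T$ in general, so writing down $\widetilde{P}(\widetilde{s}'|\widetilde{s},\widetilde{a})$ is exponential in $T$, and stops there; it makes no attempt to rule out non-canonical reductions, and the surrounding text even concedes that the lemma only ``informally suggests'' that $\Psi_0 \notin \Pc$.

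Everything from your second paragraph onward---the solution-size bound via a decoder, the incompressibility family---is strictly more than the paper proves. The upside of your route is an unconditional non-existence statement rather than the paper's ``the obvious encoding is too large''; the cost is precisely the obstacle you flag, namely incompressibility of the optimal $\pi_{\NM}$, which is clean only after you fix tabular policy representations and which the paper never engages with. If your goal is to match the paper, your first paragraph alone already does it; the remainder is a genuine strengthening that the paper does not attempt, and you should be aware that the incompressibility step, as you note, is where the real work would lie.
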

\begin{proof}
    In general, coding any instance of $\Psi_0$ in the representation required by $\widetilde{\Psi}_0$, which is an extended MDP $\emdp$, holds exponential complexity w.r.t. the input of the initial instance of $\Psi_0$, \ie a CMP $\cmp$. Indeed, to build the extended MDP $\emdp$ from $\cmp$, we need to define the transition probabilities $\widetilde{P} (\widetilde{s}' | \widetilde{s}, \widetilde{a} )$ for every $\widetilde{s}' \in \widetilde{\mathcal{S}}, \widetilde{a} \in \widetilde{\mathcal{A}}, \widetilde{s} \in \widetilde{\mathcal{S}}$. Whereas the action space remains unchanged $\widetilde{\mathcal{A}} = \mathcal{A}$, the extended state space $\widetilde{\mathcal{S}}$ has cardinality $|\widetilde{\mathcal{S}}| = S^T$ in general, which grows exponentially in $T$.
\end{proof}
The latter result informally suggests that $\Psi_0 \notin \Pc$. 
Indeed, we can now prove the main theorem of this section, which shows that $\Psi_0$ is $\NP$-hard under the common assumption that $\Pc \neq \NP$.
\begin{restatable}[]{theorem}{npHardness}
    $\Psi_0 $ is $\NP$-hard.
    \label{thr:np_hardness}
\end{restatable}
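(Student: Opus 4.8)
The plan is to establish $\NP$-hardness via a polynomial-time many-to-one reduction from the \textsc{Hamiltonian Path} problem, which is a classical $\NP$-complete problem~\cite{arora2009complexity}. Given a graph $G = (V, E)$, I would build a \emph{deterministic} CMP $\cmp$ as follows: add a dummy source $s_0$ with only outgoing edges to every vertex, set $\Sspace = V \cup \{ s_0 \}$ (so $S = |V| + 1$), fix the horizon $T = S$, let $\mu = \delta_{s_0}$, and take an action space $\Aspace$ of size $\max_v \deg(v)$ in which action $a_i$ deterministically moves along the $i$-th incident edge (self-looping when $i$ exceeds the degree). Because both $P$ and the candidate policy are deterministic, any $\pi \in \Pic{\NM}{\D}$ induces a \emph{single} trajectory $h$---a walk of length $T$ over $\cmp$---and the objective collapses to $\mathcal{E}(\pi) = H \big( d_h (\cdot) \big)$, the entropy of the visitation frequency of that walk. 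By Lemma~\ref{thr:optimal_deterministic} it is without loss of generality to restrict the search to $\Pic{\NM}{\D}$, so solving $\Psi_0$ on $\cmp$ amounts to finding the entropy-maximizing walk of length $T$ from $s_0$.

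The crux of the reduction is the identity $H \big( d_h (\cdot) \big) \le \log S$, with equality if and only if every one of the $S$ states is visited exactly once over the $T = S$ steps. Since $s_0$ has no incoming edges, it is visited only at $t = 0$, and uniform visitation then forces the remaining $|V|$ steps to traverse each vertex of $G$ exactly once, \ie to trace a Hamiltonian path of $G$. Conversely, any Hamiltonian path yields a walk with perfectly uniform $d_h$ and hence value $\log S$. Therefore $\max_{\pi \in \Pinm} \mathcal{E}(\pi) = \log S$ precisely when $G$ admits a Hamiltonian path, and the decision version ``is the optimum of $\Psi_0$ at least $\log S$?'' decides \textsc{Hamiltonian Path}. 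The encoding of $\cmp$ has size polynomial in $|V|$ and $|E|$ (with $|\Aspace| \le |V|$ and $|\Sspace| = |V| + 1$), so the reduction is polynomial, establishing that $\Psi_0$ is $\NP$-hard.

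The points I expect to need the most care are twofold. First, one must fix a single, $T$-independent action space and a \emph{legal} deterministic transition model so that walks in $\cmp$ correspond faithfully to walks in $G$, while keeping $|\Aspace|$ and the tabular description of $P$ polynomial in the size of $G$; the self-looping convention for out-of-range actions is what makes $P$ well defined without inflating the representation. Second, the ``only if'' direction hinges on the strict concavity of the entropy, which guarantees that $H(d_h) = \log S$ is attainable \emph{only} by a perfectly uniform $d_h$ and hence by a simple path covering all states---this is the step that pins the optimum to genuine Hamiltonian paths rather than merely near-uniform walks. I expect this exactness to be the main obstacle: the value $\log S$ must be both achievable (for yes-instances) and strictly unachievable (for no-instances), so the argument relies on the deterministic construction collapsing the expected entropy to a single trajectory, which in turn rests on Lemma~\ref{thr:optimal_deterministic}. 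Contrasted with Lemma~\ref{thr:no_reduction}, which rules out compiling $\Psi_0$ into the tractable extended-MDP problem $\widetilde{\Psi}_0$, this combinatorial reduction is the natural path to the hardness result.
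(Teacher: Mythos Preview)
Your proposal is correct and takes a genuinely different route from the paper. The paper proves Theorem~\ref{thr:np_hardness} indirectly, by (i) casting $\Psi_0$ as optimal Markovian control in a specially constructed POMDP $\epomdp$ whose reward encodes the trajectory entropy, (ii) passing to the associated policy-existence decision problem, and (iii) extending the $\NP$-completeness proof of \citet{mundhenk2000complexity} (via 3SAT) to this restricted POMDP class. Your argument is a direct many-to-one reduction from \textsc{Hamiltonian Path}: a deterministic CMP with a source $s_0$, point-mass $\mu$, and horizon $T = S$ collapses the objective to the entropy of a single walk, which attains $\log S$ iff the walk covers every state once, \ie iff $G$ has a Hamiltonian path. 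This is more elementary and self-contained, and it incidentally shows hardness already on the subclass of \emph{deterministic} CMPs with deterministic initial state; the paper's approach, by contrast, makes explicit the connection between finite-sample MSE and partial observability, which is thematically useful for the narrative around Lemma~\ref{thr:law_total_variance}. One small point to tighten in your write-up: make sure the action budget at $s_0$ is $|V|$ (so that every potential starting vertex of a Hamiltonian path is reachable from the source); your later bound $|\Aspace| \le |V|$ suggests you intend $\max_{v \in \Sspace} \deg(v)$ with $s_0$ included, and stating this explicitly removes any ambiguity.
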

\begin{proof}[Proof Sketch]
To prove the theorem, it is sufficient to show that there exists a problem $\Psi_c \in \NP$-hard so that $\Psi_c \leq_p \Psi_0$. We show this by reducing 3SAT, which is a well-known $\NP$-complete problem, to $\Psi_0$. To derive the reduction we consider two intermediate problems, namely $\Psi_1$ and $\Psi_2$. Especially, we aim to show that the following chain of reductions holds
    \begin{equation*}
        \Psi_0 \geq_m \Psi_1 \geq_p \Psi_2 \geq_p \text{3SAT}.
    \end{equation*}
First, we define $\Psi_1$ and we prove that $\Psi_0 \geq_m \Psi_1$. Informally, $\Psi_1$ is the problem of finding a reward-maximizing Markovian policy $\pi_{\M} \in \Pim$ w.r.t. the entropy objective~\eqref{eq:finite_samples_entropy} encoded through a reward function in a convenient POMDP $\epomdp$. We can build $\epomdp$ from the CMP $\cmp$ similarly as the extended MDP $\emdp$ (see Section~\ref{sec:preliminaries} and the proof of Lemma~\ref{thr:optimal_deterministic} for details), except that the agent only access the observation space $\widetilde{\Omega}$ instead of the extended state space $\widetilde{\Sspace}$. In particular, we define $\widetilde{\Omega} = \Sspace$ (note that $\Sspace$ is the state space of the original CMP $\cmp$), and $\widetilde{O} (\widetilde{o} | \widetilde{s}) = \widetilde{s}_{-1}$.
Then, the reduction $\Psi_0 \geq_m \Psi_1$ works as follows. We denote as $\mathcal{I}_{\Psi_i}$ the set of possible instances of problem $\Psi_i$. We show that $\Psi_0$ is harder than $\Psi_1$ by defining the polynomial-time functions $\psi$ and $\phi$ such that any instance of $\Psi_1$ can be rewritten through $\psi$ as an instance of $\Psi_0$, and a solution $\pi_{\NM}^* \in \Pinm$ for $\Psi_0$ can be converted through $\phi$ into a solution $\pi_{\M}^* \in \Pim$ for the original instance of $\Psi_1$.
\begin{figure*}[ht!]
	\begin{subfigure}[t]{0.34\textwidth}
    		\centering
    		\vspace{0.5cm}
    		\includegraphics[scale=1, valign=t]{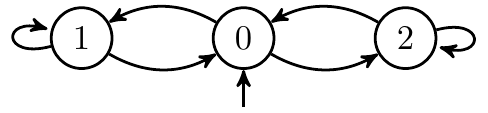}
    		\caption{\emph{3State}}
    		\label{fig:three_state}
    	\end{subfigure}
    	\begin{subfigure}[t]{0.3\textwidth}
    		\centering
    		\includegraphics[scale=1, valign=t]{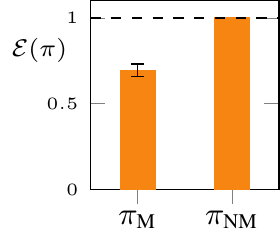}
    		\vspace{-0.05cm}
    		\caption{Average Entropy}
    		\label{fig:three_state_entropy}
    	\end{subfigure}
     \begin{subfigure}[t]{0.3\textwidth}
    		\centering
    		\includegraphics[scale=1, valign=t]{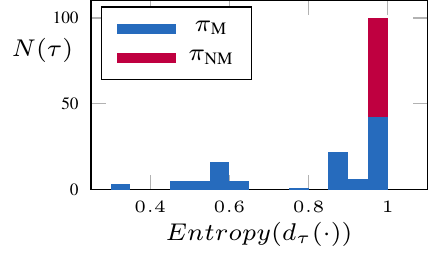}
    		\vspace{-0.27cm}
    		\caption{Entropy Frequency}
    		\label{fig:three_state_histogram}
    	\end{subfigure}
    	
    	\begin{subfigure}[t]{0.34\textwidth}
    		\centering
    		\vspace{0.15cm}
    		\includegraphics[scale=1, valign=t]{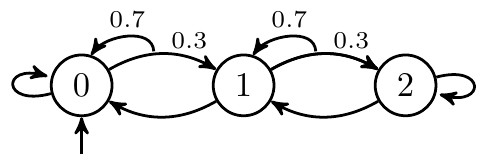}
    		\caption{\emph{River Swim}}
    		\label{fig:river_swim}
    	\end{subfigure}
    	\begin{subfigure}[t]{0.3\textwidth}
    		\centering
    		\includegraphics[scale=1, valign=t]{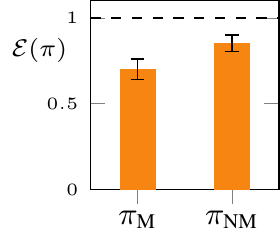}
    		\vspace{-0.07cm}
    		\caption{Average Entropy}
    		\label{fig:river_swim_entropy}
    	\end{subfigure}
     \begin{subfigure}[t]{0.3\textwidth}
    		\centering
    		\includegraphics[scale=1, valign=t]{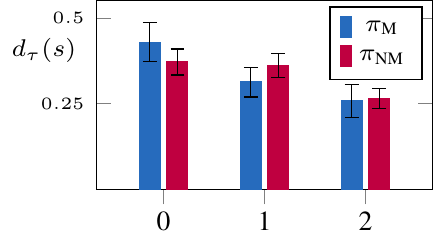}
    		\vspace{-0.15cm}
    		\caption{State Visitation Frequency}
    		\label{fig:river_swim_visits}
    	\end{subfigure}
    \caption{In \textbf{(a, d)}, we illustrates the \emph{3State}  and \emph{River Swim} CMPs. Then, we report the average entropy induced by an optimal (stationary) Markovian policy $\pi_{\M}$ and an optimal non-Markovian policy $\pi_{\NM}$ in the \emph{3State} ($T = 9$) \textbf{(b)} and the \emph{River Swim} ($T =10$) \textbf{(e)}. In \textbf{(c)} we report the entropy frequency in the \emph{3State}, in \textbf{(f)} the state visitation frequency in the \emph{River Swim}. We provide 95\% c.i. over 100 runs.}
    \label{fig:numerical_validation}
\end{figure*}
The function $\psi$ sets $\Sspace = \widetilde{\Omega}$ and derives the transition model of $\cmp$ from the one of $\epomdp$, while $\phi$ converts the optimal solution of $\Psi_0$ by computing $\pi_{\M}^* (a | o, t) = \sum_{ho \in \Hspace_o} p^{\pi_{\NM}^{*}}_T (ho) \pi_{\NM}^* (a|ho) $, where $\Hspace_o$ stands for the set of histories $h \in \Hspace_{t}$ ending in the observation $o \in \Omega$.
Thus, we have that $\Psi_0 \geq_m \Psi_1$ holds. We now define $\Psi_2$ as the policy existence problem w.r.t. the problem statement of $\Psi_1$. Hence, $\Psi_2$ is the problem of determining whether the value of a reward-maximizing Markovian policy $\pi_{\M}^* \in \argmax_{\pi \in \Pim} \mathcal{J}_{\epomdp} (\pi)$ is greater than 0. Since computing an optimal policy in POMDPs is in general harder than the relative policy existence problem ~\citep[][Section 3]{lusena2001complexity}, we have that $\Psi_1 \geq_p \Psi_2$.
For the last reduction, \ie $\Psi_2 \geq_p \text{3SAT}$, we extend the proof of Theorem 4.13 in \cite{mundhenk2000complexity}, which states that the policy existence problem for POMDPs is $\NP$-complete. In particular, we show that this holds within the restricted class of POMDPs defined in $\Psi_1$.
Since the chain $\Psi_0 \geq_m \Psi_1 \geq_p \Psi_2 \geq_p \text{3SAT}$ holds, we have that $\Psi_0 \geq_p \text{3SAT}$. Since $ \text{3SAT} \in \NP$-complete, we can conclude that $\Psi_0$ is $\NP$-hard.
\end{proof}
Having established the hardness of the optimization of $\Psi_0$, one could now question whether the problem $\Psi_0$ is instead easy to verify ($\Psi_0 \in \NP$), from which we would conclude that $\Psi_0 \in \NP$-complete. Whereas we doubt that this problem is significantly easier to verify than to optimize, the focus of this work is on its optimization version, and we thus leave as future work a finer analysis to show that $\Psi_0 \notin \NP$.

\section{Numerical Validation}
\label{sec:numerical_validation}
Despite the hardness result of Theorem~\ref{thr:np_hardness}, we provide a brief numerical validation around the potential of non-Markovianity in MSE exploration. Crucially, the reported analysis is limited to simple domains and short time horizons, and it has to be intended as an illustration of the theoretical claims reported in previous sections. For the sake of simplicity, in this analysis we consider stationary policies for the Markovian set, though similar results can be obtained for time-variant strategies as well (in stochastic environments).
Whereas a comprehensive evaluation of the practical benefits of non-Markovianity in MSE exploration is left as future work, we discuss in Section~\ref{sec:conclusions} why we believe that the development of scalable methods is not hopeless even in this challenging setting.

In this section, we consider a \emph{3State} ($S = 3, A = 2, T = 9$), which is a simple abstraction of the two-rooms in Figure~\ref{fig:example}, and a \emph{River Swim}~\cite{strehl2008analysis} ($S = 3, A = 2, T = 10$) that are depicted in Figure~\ref{fig:three_state}, \ref{fig:river_swim} respectively. Especially, we compare the expected entropy~\eqref{eq:finite_samples_entropy} achieved by an optimal non-Markovian policy $\pi_{\NM} \in \argmax_{\pi \in \Pi_{\NM}} \mathcal{E} (\pi)$, which is obtained by solving the extended MDP as described in the proof of Lemma~\ref{thr:optimal_deterministic}, against an optimal Markovian policy $\pi_{\M} \in \argmax_{\pi \in \Pi_{\M}} \mathcal{E} (\pi)$. 
In confirmation of the result in Theorem~\ref{thr:regret_theorem}, $\pi_{\M}$ cannot match the performance of $\pi_{\NM}$ (see Figure~\ref{fig:three_state_entropy}, \ref{fig:river_swim_entropy}).
In \emph{3State}, an optimal strategy requires going left when arriving in state $0$ from state $2$ and vice versa. The policy $\pi_{\NM}$ is able to do that, and it always realizes the optimal trajectory (Figure~\ref{fig:three_state_histogram}). Instead, $\pi_{\M}$ is uniform in $0$ and it often runs into sub-optimal trajectories.
In the \emph{River Swim}, the main hurdle is to reach state $2$ from the initial one. Whereas $\pi_{\M}$ and $\pi_{\NM}$ are equivalently good in doing so, as reported in Figure~\ref{fig:river_swim_visits}, only the non-Markovian strategy is able to balance the visitations in the previous states when it eventually reaches $2$. The difference is already noticeable with a short horizon and it would further increase with a longer $T$.

\section{Discussion and Conclusion}
\label{sec:conclusions}
In the previous sections, we detailed the importance of non-Markovianity when optimizing a finite-sample MSE objective, but we also proved that the corresponding optimization problem is NP-hard in its general formulation.
Despite the hardness result, we believe that it is not hopeless to learn exploration policies with some form of non-Markovianity, while still preserving an edge over Markovian strategies. In the following paragraphs, we discuss potential avenues to derive practical methods for relevant relaxations to the general class of non-Markovian policies.

\textbf{Finite-Length Histories}~~~Throughout the paper, we considered non-Markovian policies that condition their decisions on histories of arbitrary length, \ie $\pi : \Hspace \to \Delta(\Aspace)$. However, the complexity of optimizing such policies grows exponentially with the length of the history. To avoid this exponential blowup, one can define a class of non-Markovian policies $\pi : \Hspace_H \to \Delta(\Aspace)$ in which the decisions are conditioned on histories of a finite length $H > 1$ that are obtained from a sliding window on the full history. The optimal policy within this class would still retain better regret guarantees than an optimal Markovian policy, but it would not achieve zero regret in general. With the length parameter $H$ one can trade-off the learning complexity with the regret according to the structure of the domain. For instance, $H = 2$ would be sufficient to achieve zero regret in the \emph{3State} domain, whereas in the \emph{River Swim} domain any $H < T$ would cause some positive regret.

\textbf{Compact Representations of the History}~~~Instead of setting a finite length $H$, one can choose to perform function approximation on the full history to obtain a class of policies $\pi : f(\Hspace) \to \Delta (\Aspace)$, where $f$ is a function that maps an history $h$ to some compact representation. An interesting option is to use the notion of \emph{eligibility traces}~\cite{sutton2018reinforcement} to encode the information of $h$ in a vector of length $S$, which is updated as $\bm{z}_{t + 1} \gets \lambda \bm{z}_t + \bm{1}_{s_t}$, where $\lambda \in (0, 1)$ is a discount factor, $\bm{1}_{s_t}$ is a vector with a unit entry at the index $s_t$, and $\bm{z}_0 = 0$. The discount factor $\lambda$ acts as a smoothed version of the length parameter $H$, and it can be dynamically adapted while learning. Indeed, this eligibility traces representation is particularly convenient for policy optimization~\cite{deisenroth2013survey}, in which we could optimize in turn a parametric policy over actions $\pi_{\bm{\theta}} (\cdot | \bm{z}, \lambda)$ and a parametric policy over the discount $\pi_{\bm{\nu}} (\lambda)$. To avoid a direct dependence on $S$, one can define the vector $\bm{z}$ over a discretization of the state space.

\textbf{Deep Recurrent Policies}~~~Another noteworthy way to do function approximation on the history is to employ recurrent neural networks~\cite{williams1989learning, hochreiter1997lstm} to represent the non-Markovian policy. This kind of recurrent architecture is already popular in RL. In this paper we are providing the theoretical ground to motivate the use of deep recurrent policies to address maximum state entropy exploration.

\textbf{Non-Markovian Control with Tree Search}~~~In principle, one can get a realization of actions from the optimal non-Markovian policy without ever computing it, \eg by employing a Monte-Carlo Tree Search (MCTS)~\cite{kocsis2006bandit} approach to select the next action to take. Given the current state $s_t$ as a root, we can build the tree of trajectories from the root through repeated simulations of potential action sequences. With a sufficient number of simulations and a sufficiently deep tree, we are guaranteed to select the optimal action at the root. If the horizon is too long, we can still cut the tree at any depth and approximately evaluate a leaf node with the entropy induced by the path from the root to the leaf. The drawback of this procedure is that we require to access a simulator with reset (or a reliable estimate of the transition model) to actually build the tree.

Having reported interesting directions to learn non-Markovian exploration policies in practice, we would like to mention some relevant online RL settings that might benefit from such exploration policies. We leave as future work a formal definition of the settings and an empirical study.

\textbf{Single-Trial RL}~~~In many relevant real-world scenarios, where data collection might be costly or non-episodic in nature, we cannot afford multiple trials to achieve the desired exploration of the environment. Non-Markovian exploration policies guarantee a good coverage of the environment in a single trial and they are particularly suitable for online learning processes.

\textbf{Learning in Latent MDPs}~~~In a latent MDP scenario~\cite{hallak2015contextual, kwon2021latent} an agent interacts with an (unknown) environment drawn from a class of MDPs to solve an online RL task. A non-Markovian exploration policy pre-trained on the whole class could exploit the memory to perform a fast identification of the specific context that has been drawn, quickly adapting to the optimal environment-specific policy.

In this paper we focus on the gap between non-Markovian and Markovian policies, which can be either stationary or time-variant. Future works might consider the role of stationarity~\citep[see also][]{akshay2013steady, laroche2022non}, such as establishing under which conditions stationary strategies are sufficient in this setting. Finally, here we focus on state distributions, which is most common in the MSE literature, but similar results could be extended to state-action distributions with minor modifications.

To conclude, we believe that this work sheds some light on the, previously neglected, importance of non-Markovianity to address maximum state entropy exploration. Although it brings a negative result about the computational complexity of the problem, we believe it can provide inspiration for future empirical and theoretical contributions on the matter.

%

\section*{Acknowledgements}
We would like to thank the reviewers of this paper for their feedbacks and useful advices. We also thank Romain Laroche and Remi Tachet des Combes for having signalled a technical error in a previous draft of the manuscript.

\bibliography{biblio}
\bibliographystyle{icml2022}

\clearpage
\onecolumn
\appendix

\section{Related Work}
\label{apx:related_work}
\citet{hazan2019maxent} were the first to consider an entropic measure over the state distribution as a sensible learning objective for an agent interacting with a reward-free environment~\cite{jin2020reward}. Especially, they propose an algorithm, called MaxEnt, that learns a mixture of policies that collectively maximize the Shannon entropy of the discounted state distribution, \ie~\eqref{eq:infinite_samples_entropy}. The final mixture is learned through a conditional gradient method, in which the algorithm iteratively estimates the state distribution of the current mixture to define an intrinsic reward function, and then identifies the next policy to be added by solving a specific RL sub-problem with this reward. A similar methodology has been obtained by~\citet{lee2019smm} from a game-theoretic perspective on the MSE exploration problem. Their algorithm, called SMM, targets the Shannon entropy of the marginal state distribution instead of the discounted distribution of MaxEnt. Another approach based on the conditional gradient method is FW-AME~\cite{tarbouriech2019active}, which learns a mixture of policies to maximize the entropy of the stationary state-action distribution. As noted in~\cite{tarbouriech2019active}, the mixture of policies might suffer a slow mixing to the asymptotic distribution for which the entropy is maximized. In~\cite{mutti2020intrinsically}, the authors present a method (IDE$^3$AL) to learn a single exploration policy that simultaneously accounts for the entropy of the stationary state-action distribution and the mixing time.

Even if they are sometimes evaluated on continuous domains (especially~\cite{hazan2019maxent, lee2019smm}), the methods we mentioned require an accurate estimate of either the state distribution~\cite{hazan2019maxent, lee2019smm} or the transition model~\cite{tarbouriech2019active, mutti2020intrinsically}, which hardly scales to high-dimensional domains. A subsequent work by~\citet{mutti2020policy} proposes an approach to estimate the entropy of the state distribution through a non-parametric method, and then to directly optimize the estimated entropy via policy optimization. Their algorithm, called MEPOL, is able to learn a single exploration policy that maximizes the entropy of the marginal state distribution in challenging continuous control domains. \citet{liu2021behavior} combine non-parametric entropy estimation with learned state representations into an algorithm, called APT, that successfully addresses MSE exploration problems in visual-inputs domains. \citet{seo2021state} shows that even random state representations are sufficient to learn MSE exploration policies from visual inputs. On a similar line, \citet{yarats2021reinforcement} consider simultaneously learning state representations and a basis for the latent space (or prototypical representations) to help reducing the variance of the entropy estimates. Finally, \citet{liu2021aps} consider a method, called APS, to learn a set of code-conditioned policies that collectively maximizes the MSE objective by coupling non-parametric entropy estimation and successor representation.

Whereas all of the previous approaches accounts for the Shannon entropy in their objectives, recent works~\cite{zhang2020exploration, guo2021geometric} consider alternative formulations. \citet{zhang2020exploration} argues that the R\'enyi entropy provides a superior incentive to cover all of the corresponding space than the Shannon entropy, and they propose a method to optimize the R\'enyi of the state-action distribution via gradient ascent (MaxR\'enyi). On an orthogonal direction, the authors of~\cite{guo2021geometric} consider a reformulation of the entropy function that accounts for the underlying geometry of the space. They present a method, called GEM, to learn an optimal policy for the geometry-aware entropy objective.

\begin{table}[t!]
\caption{Overview of the methods addressing MSE exploration in a controlled Markov process. For each method, we report the nature of the corresponding MSE objective, \ie the entropy function (Entropy), whether it considers stationary, discounted, or marginal distributions (Distribution), and if it accounts for the state space $\Sspace$ or the state-action space $\Sspace\Aspace$ (Space). We also specify if the method learns a single policy rather than a mixture of policies (Mixture), and if it supports non-parametric entropy estimation (Non-Parametric).}
\centering
\begin{tabular}{llllcc}
	\toprule
	Algorithm & Entropy & Distribution & Space & Mixture & Non-Parametric \\ 
	\midrule
	MaxEnt~\cite{hazan2019maxent} & Shannon & discounted & state & \cmark & \xmark \\
	FW-AME~\cite{tarbouriech2019active} & Shannon & stationary & state-action & \cmark & \xmark \\
	SMM~\cite{lee2019smm} & Shannon & marginal & state & \cmark & \xmark \\
	IDE$^3$AL~\cite{mutti2020intrinsically} & Shannon & stationary & state-action & \xmark & \xmark \\
	MEPOL~\cite{mutti2020policy} & Shannon & marginal & state & \xmark & \cmark \\
	MaxR\'enyi~\cite{zhang2020exploration} & R\'enyi & discounted & state-action & \xmark & \xmark \\
	GEM~\cite{guo2021geometric} & geometry-aware & marginal & state & \xmark & \xmark \\
	APT~\cite{liu2021behavior} & Shannon & marginal & state & \xmark & \cmark \\
	RE3~\cite{seo2021state} & Shannon & marginal & state & \xmark & \cmark \\
	Proto-RL~\cite{yarats2021reinforcement} & Shannon & marginal & state & \xmark & \cmark \\
	APS~\cite{liu2021aps} & Shannon & marginal & state & \cmark & \cmark \\
	\bottomrule
\end{tabular}
\end{table}

\subsection{Online Learning of Global Concave Rewards}

Another interesting pair of related works~\cite{cheung2019exploration, cheung2019regret} addresses a reinforcement learning problem for the maximization of concave functions of vectorial rewards, which in a special case~\citep[][Section 6.2]{cheung2019exploration} is akin to our objective function~\eqref{eq:finite_samples_entropy}.
Beyond this similarity in the objective definition, those works and our paper differ for some crucial aspects, and the contributions are essentially non-overlapping. On the one hand, \cite{cheung2019exploration, cheung2019regret} deal with an online learning problem, in which they care for the performance of the policies deployed during the learning process, whereas we only consider the performance of the optimal policy. On the other hand, we aim to compare the classes of non-Markovian and Markovian policies respectively, whereas they consider non-stationary or adaptive strategies to maximize the online objective. Finally, their definition of regret is based on an infinite-samples relaxation of the problem, whereas we account for the performance of the optimal general policy \wrt the finite-sample objective~\eqref{eq:finite_samples_entropy} to define our regret-to-go (Definition~\ref{thr:regret-to-go}).

\section{Missing Proofs}
\label{apx:proofs}
\subsection{Proofs of Section~\ref{sec:infinite_samples}}
\label{apx:proofs_infinite_samples}

\distributionsEquivalence*

\begin{proof}
	First, note that a non-Markovian policy $\pi \in \Pinm$ can always reduce to a Markovian policy $\pi \in \Pim$ by conditioning the decision rules on the history length. Thus, $\distset{\NM}{x} \supseteq \distset{\M}{x}$ is straightforward for any $x \in \{ \infty, \gamma, T \}$. 
From the derivations in~\citep[][Theorem 5.5.1]{puterman2014markov}, we have that $\distset{\M}{x} \supseteq \distset{\NM}{x}$ as well. Indeed, for any non-Markovian policy $\pi \in \Pinm$, we can build a (non-stationary) Markovian policy $\pi' \in \Pim$ as
	\begin{equation*}
	\begin{aligned}
        &\pi' = \big( \pi'_1, \pi'_2, \ldots, \pi'_t, \ldots \big), 
        &\text{such that } \pi'_t (a | s) = \frac{d_t^\pi (s, a)}{d_t^\pi (s)},\;\; \forall s \in \Sspace, \forall a \in \Aspace.
    \end{aligned}
    \end{equation*}
    For $t = 0$, we have that $d^\pi_0 (\cdot) = d^{\pi'}_0 (\cdot)= \mu (\cdot)$, which is the initial state distribution. We proceed by induction to show that if $d^\pi_{t-1} (\cdot) = d^{\pi'}_{t-1} (\cdot)$, then we have
    \begin{align*}
        d_t^{\pi'}(s)
        &= \sum_{s' \in \Sspace} \sum_{a \in \Aspace} d_{t-1}^{\pi'} (s') \pi'_{t-1} (a | s') P(s | s', a) \\
        &= \sum_{s' \in \Sspace} \sum_{a \in \Aspace}  \frac{d_{t-1}^{\pi'} (s') }{d^\pi_{t-1} (s')} d^\pi_{t-1} (s', a) P(s | s', a) \\
        &= \sum_{s' \in \Sspace} \sum_{a \in \Aspace}  d^\pi_{t-1} (s', a) P(s | s', a) \\
        &= d_t^\pi (s).
    \end{align*}
    Since $d_t^\pi (s) = d_t^{\pi'} (s)$ holds for any $t \geq 0$ and $\forall s \in \Sspace$, we have  $d_{\infty}^{\pi} (\cdot) = d_{\infty}^{\pi'} (\cdot)$, $d_{\gamma}^{\pi} (\cdot) = d_{\gamma}^{\pi'} (\cdot)$, $d_{T}^{\pi} (\cdot) = d_{T}^{\pi'} (\cdot)$, and thus $\distset{\M}{x} \supseteq \distset{\NM}{x}$. Then, $\distset{\NM}{x} \equiv \distset{\M}{x}$ follows.
\end{proof}

\optimalMarkovDistributions*

\begin{proof}
	The result is straightforward from Theorem~\ref{thr:distributions_equivalence} and noting that the set of non-Markovian policies $\Pinm$ with arbitrary history-length is as powerful as the general set of policies $\Pi$. Thus, for every policy $\pi \in \Pi$ there exists a (possibly randomized) policy $\pi' \in \Pim$ inducing the same (stationary, discounted or marginal) state distribution of $\pi$, \ie $d^\pi (\cdot) = d^{\pi'} (\cdot)$, which implies $H \big( d^\pi (\cdot) \big) = H \big( d^{\pi'} (\cdot) \big)$. If it holds for any $\pi \in \Pi$, then it holds for $\pi^* \in \argmax_{\pi \in \Pi} H \big( d^\pi (\cdot) \big)$.
\end{proof}

\subsection{Proofs of Section~\ref{sec:finite_samples}}
\label{apx:proofs_finite_samples}

\regret*
\begin{proof}
	The result $\R_{T - t} (\pi_{\NM}, h_t) = 0$ for a policy $\pi_{\NM} \in \Pic{\D}{\NM}$ is a direct implication of Lemma~\ref{thr:optimal_deterministic}, whereas $\R_{T - t} (\pi_{\M}, h_t) \geq 0$ for any $\pi_{\M} \in \Pim$ is given by Lemma~\ref{thr:regret_bounds}, which states that even an optimal Markovian policy $\pi_{\M}^* \in \argmax_{\pi \in \Pim} \mathcal{E} (\pi)$ suffers expected regret-to-go $\R_{T - t} (\pi_{\M}^*) \geq 0$.
\end{proof}

\sufficientCondition*
\begin{proof}
	This result is a direct consequence of the combination of Lemma~\ref{thr:law_total_variance} and Lemma~\ref{thr:regret_bounds}. Indeed, if the policy $\pi_{\M} \in \argmax_{\pi \in \Pim} \mathcal{E} (\pi)$ is randomized in $s_t$ we have 
	\begin{equation*}
		0 < \Var \big[ \mathcal{B} ( \pi_{\M} (a^*|s_t) ) \big] = \Var_{hs_t \sim p^{\pi_{\NM}}_t} \big[ \EV \big[ \mathcal{B} ( \pi_{\NM} (a^* | hs_t) ) \big] \big],
	\end{equation*}
	from Lemma~\ref{thr:law_total_variance}, which gives a lower bound to the expected regret-to-go $\lR_{T - t} (\pi_{\M}, h_t) > 0$ through Lemma~\ref{thr:regret_bounds}.
\end{proof}

\lawTotalVariance*
\begin{proof}
	Let us consider the random variable $A \sim \mathcal{P}$ denoting the event ``the agent takes action $a^* \in \Aspace$''. Through the law of total variance~\cite{bertsekas2002introduction}, we can write the variance of $A$ given $s \in \Sspace$ and $t \geq 0$ as
	\begin{align}
		\Var \big[ A | s, t \big]
		&= \EV \big[ A^2 | s, t \big] - \EV \big[ A | s, t \big]^2 \nonumber \\
		&= \EV_{h} \Big[ \EV \big[ A^2 | s, t, h \big] \Big] - \EV_{h} \Big[ \EV \big[ A | s, t, h \big] \Big]^2 \nonumber \\
		&= \EV_{h} \Big[ \Var \big[ A | s, t, h \big] + \EV \big[ A | s, t, h \big]^2 \Big]
		- \EV_{h} \Big[ \EV_{\pi} \big[ A | s, t, h \big] \Big]^2 \nonumber \\
		&= \EV_{h} \Big[ \Var \big[ A | s, t, h \big] \Big] +\EV_{h} \Big[  \EV \big[ A | s, t, h \big]^2 \Big] - \EV_{h} \Big[ \EV \big[ A | s, t, h \big] \Big]^2 \nonumber \\
		&= \EV_{h} \Big[ \Var \big[ A | s, t, h \big] \Big] + \Var_{h} \Big[ \EV\big[ A | s, t, h \big] \Big]. \label{eq:lotv1}
	\end{align}
	Now let the conditioning event $h$ be distributed as $h \sim p^{\pi_{\NM}}_{t - 1}$, so that the condition $s, t, h$ becomes $hs$ where $hs = (s_{0}, a_{0}, s_{1}, \ldots, s_{t} = s) \in \Hspace_{t}$, and let the variable $A$ be distributed according to $\mathcal{P}$ that maximizes the objective~\eqref{eq:finite_samples_entropy} given the conditioning. Hence, we have that the variable $A$ on the left hand side of~\eqref{eq:lotv1} is distributed as a Bernoulli $\mathcal{B} (\pi_{\M} (a^* | s, t))$, where $\pi_{\M} \in \argmax_{\pi \in \Pim} \mathcal{E} (\pi)$, and the variable $A$ on the right hand side of~\eqref{eq:lotv2} is distributed as a Bernoulli $\mathcal{B} (\pi_{\NM} (a^* | hs))$, where $\pi_{\NM} \in \argmax_{\pi \in \Pinm} \mathcal{E} (\pi)$. Thus, we obtain
	\begin{equation}
		\Var \big[ \mathcal{B} ( \pi_{\M} (a^*|s, t) ) \big] = \EV_{hs \sim p^{\pi_{\NM}}_t} \big[ \Var \big[ \mathcal{B} ( \pi_{\NM} (a^* | hs) ) \big] \big] + \Var_{hs \sim p^{\pi_{\NM}}_t} \big[ \EV \big[ \mathcal{B} ( \pi_{\NM} (a^* | hs) ) \big] \big].
	\label{eq:lotv2}
	\end{equation}
	Under Assumption~\ref{ass:unique_action}, we know from Lemma~\ref{thr:optimal_deterministic} that the policy $\pi_{\NM}$ is deterministic, \ie $\pi_{\NM} \in \Pic{\D}{\NM}$, so that $\Var \big[ \mathcal{B} ( \pi_{\NM} (a^* | hs) ) \big] = 0$ for every $hs$, which concludes the proof.
\end{proof}

\regretBounds*
\begin{proof}
	From the definition of the expected regret-to-go (Definition~\ref{thr:regret-to-go}), we have that
	\begin{equation*}
		\R_{T - t} (\pi_{\M}, h_t) = H^* - \EV_{h_{T - t} \sim p^{\pi_{\M}}_{T - t}} \big[ H \big( d_{h_t \oplus h_{T - t}} (\cdot) \big) \big],
	\end{equation*}
	in which we will omit $h_t$ in the regret-to-go $\R_{T - t} (\pi_{\M}, h_t) = \R_{T - t} (\pi_{\M})$ as $h_t$ is fixed by the statement. To derive a lower bound and an upper bound to $\R_{T - t} (\pi_{\M})$ we consider the impact that taking a sub-optimal action $a \in \Aspace \setminus \{ a^* \}$ in state $s_t$ would have in a best-case and a worst-case CMP respectively, which is detailed in Lemma~\ref{thr:best_case_cmp} and Lemma~\ref{thr:worst_case_cmp}. Especially, we can write
\begin{align*}
	\R_{T - t} (\pi_{\M}) 
	&= H^* - \EV_{h_{T - t} \sim p^{\pi_{\M}}_{T - t}} \big[ H \big( d_{h_t \oplus h_{T - t}} (\cdot) \big) \big] \\
	&\geq H^* - \pi_{\M} (a^* | s_t) H^* - \big( 1 - \pi_{\M} (a^* | s_t) \big) H_{2}^* \\
	&= ( H^*- H^*_2 ) \big( 1 - \pi_{\M} (a^* | s_t) \big)
\end{align*}
and
\begin{align*}
	\R_{T - t} (\pi_{\M}) 
	&= H^* - \EV_{h_{T - t} \sim p^{\pi_{\M}}_{T - t}} \big[ H \big( d_{h_t \oplus h_{T - t}} (\cdot) \big) \big] \\
	&\leq H^* - \pi_{\M} (a^* | s_t) H^* - \big( 1 - \pi_{\M} (a^* | s_t) \big) H_* \\
	&= ( H^* - H_* ) \big( 1 - \pi_{\M} (a^* | s_t) \big).
\end{align*}
	Then, we note that the event of taking a sub-optimal action $a \in \Aspace \setminus \{ a^* \}$ with a policy $\pi_{\M}$ can be modelled by a Bernoulli distribution $\mathcal{B}$ with parameter $\big( 1 - \pi_{\M} (a^* | s_t) \big)$. By combining the equation of the variance of a Bernoulli random variable with Lemma~\ref{thr:law_total_variance} we obtain
	\begin{equation*}
		\Var \big[ \mathcal{B} ( \pi_{\M} (a^*|s_t) ) \big] = \pi_{\M} (a^* | s_t) \big( 1 - \pi_{\M} (a^* | s_t) \big) = \Var_{hs \sim p^{\pi_{\NM}}_t} \big[ \EV \big[ \mathcal{B} ( \pi_{\NM} (a^* | hs_t) ) \big] \big]
	\end{equation*}
	which gives
	\begin{align*}
		\R_{T - t} (\pi_{\M}) &\geq \frac{ H^*- H^*_2 }{\pi_{\M} (a^* | s_t)} \Var_{hs \sim p^{\pi_{\NM}}_t} \big[ \EV \big[ \mathcal{B} ( \pi_{\NM} (a^* | hs_t) ) \big] \big] := \lR_{T - t} (\pi_{\M}) \\
		\R_{T - t} (\pi_{\M}) &\leq \frac{ H^* - H_* }{\pi_{\M} (a^* | s_t)} \Var_{hs \sim p^{\pi_{\NM}}_t} \big[ \EV \big[ \mathcal{B} ( \pi_{\NM} (a^* | hs_t) ) \big] \big] := \uR_{T - t} (\pi_{\M})
	\end{align*}
\end{proof}

\begin{lemma}[Worst-Case CMP]
	For any $t$-step trajectory $h_t \in \Hspace_{[T]}$, taking a sub-optimal action $a \in \Aspace \setminus \{a^*\}$ at step $t$ in the worst-case CMP $\underline{\cmp}$ gives a final entropy
	\begin{equation*}
		H_* = \min_{h \in \Hspace_{T - t}} H(d_{h_t \oplus h} (\cdot)),
	\end{equation*}
	where the minimum is attained by
	\begin{equation*}
		\underline{h}_{T - t} = \big( s_i \in \argmax_{s \in \Sspace} d_{h_t} (s) \big)_{i = t + 1}^{T} \in \argmin_{h \in \Hspace_{T - t}} H(d_{h_t \oplus h} (\cdot)).
	\end{equation*}
	\label{thr:worst_case_cmp}
\end{lemma}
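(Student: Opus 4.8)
The plan is to reduce the statement to a finite-dimensional concave minimization over visit counts. Fix the prefix $h_t$ and let $n_s$ denote the number of times state $s$ is visited along $h_t$, so that $\sum_{s \in \Sspace} n_s = t$ and $s^* := \argmax_{s \in \Sspace} d_{h_t}(s) = \argmax_{s \in \Sspace} n_s$. Any continuation $h \in \Hspace_{T-t}$ contributes $T - t$ further visits, encoded by a nonnegative integer vector $m$ with $\sum_{s} m_s = T - t$, and produces the final visitation frequency $d_{h_t \oplus h}(s) = (n_s + m_s)/T$. Hence $H(d_{h_t \oplus h}(\cdot)) = \sum_{s \in \Sspace} g(n_s + m_s)$ with $g(x) := -\tfrac{x}{T}\log\tfrac{x}{T}$, and minimizing the entropy over continuations becomes minimizing $F(c) := \sum_{s \in \Sspace} g(c_s)$ over the count vectors $c = n + m$ in the polytope $\mathcal{C} := \{ c : c_s \ge n_s \ \forall s, \ \sum_{s} c_s = T \}$. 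The worst-case CMP $\underline{\cmp}$ is precisely the environment whose transitions allow the minimizing $c$ to be realized as an actual trajectory after the sub-optimal action; the attainable minimum therefore coincides with $\min_{c \in \mathcal C} F(c)$, and it suffices to identify the minimizer of $F$ over $\mathcal C$.

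The core of the argument is that this minimizer concentrates all of the free mass on the already most-visited state. First I would observe that $g$ is concave (indeed $g''(x) = -1/(Tx) < 0$), so $F$ is concave and attains its minimum over the compact polytope $\mathcal C$ at an extreme point, since for any convex combination $c = \sum_i \lambda_i v_i$ of vertices one has $F(c) \ge \sum_i \lambda_i F(v_i) \ge \min_i F(v_i)$. Writing $c = n + (T-t)x$ with $x \in \Delta(\Sspace)$ exhibits $\mathcal C$ as an affine image of the simplex, whose extreme points are the single-pile vectors $v^{(j)} := n + (T-t)\,\mathbf{e}_j$ that dump all remaining visits on one state $j$. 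Therefore $\min_{c \in \mathcal C} F(c) = \min_{j \in \Sspace} F(v^{(j)})$, and because these $v^{(j)}$ are themselves integer vectors the continuous relaxation is tight for the (integer-valued) counting problem.

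It then remains to select the best pile location. Comparing vertices, $F(v^{(j)}) = \sum_{s \ne j} g(n_s) + g(n_j + (T - t))$, so minimizing over $j$ amounts to minimizing $\phi(n_j) := g(n_j + (T-t)) - g(n_j)$. Since $\phi'(a) = g'(a + (T-t)) - g'(a) \le 0$ by concavity of $g$, the map $\phi$ is nonincreasing, hence its minimum is attained at the largest base count, i.e. $j = s^*$. This single-pile-on-$s^*$ vertex is exactly the continuation $\underline h_{T-t} = (s^*, \ldots, s^*)$ of the statement, and its value is the claimed $H_* = \min_{h \in \Hspace_{T-t}} H(d_{h_t \oplus h}(\cdot))$; equivalently, this is the Schur-concavity of the entropy, as $n + (T-t)\mathbf{e}_{s^*}$ majorizes every feasible $n + m$.

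I expect the main obstacle to lie in the worst-case CMP rather than in the combinatorics: one must argue carefully that, after a sub-optimal action at $s_t$, the adversarial transition structure of $\underline{\cmp}$ genuinely permits the trajectory $\underline h_{T-t}$ that repeatedly revisits $s^*$ to be realized, so that the combinatorial optimum over $\Hspace_{T-t}$ is actually attainable and not merely an unachievable lower bound. A secondary point requiring care is ties in $\argmax_{s} n_s$, which yield the same optimal value by the symmetry of $F$, together with the verification that the relaxation is integral, settled by noting that the extreme points $v^{(j)}$ are integer vectors.
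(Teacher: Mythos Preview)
Your argument is correct and in fact considerably more rigorous than the paper's own proof. The paper establishes the lemma essentially by construction: it simply declares that $\underline{\cmp}$ is designed so that any sub-optimal action at step $t$ absorbs the agent into $\underline{s} \in \argmax_{s \in \Sspace} d_{h_t}(s)$, and then asserts without proof that this makes the visitation frequency ``increasingly unbalanced,'' hence entropy-minimizing. You instead formalize this assertion: by rewriting $H(d_{h_t \oplus h}(\cdot))$ as a separable concave function of the final counts and minimizing over the affine-simplex of feasible count vectors, you show that the minimum sits at a single-pile vertex, and the monotonicity of $\phi(a) = g(a + T - t) - g(a)$ (equivalently, Schur-concavity of entropy) pins down that pile at $s^*$. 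This is exactly the missing justification for the paper's ``increasingly unbalanced'' claim, and your observation that the extreme points are already integral cleanly handles the relaxation. The only point on which the paper adds content is the one you flag as the main obstacle: the paper treats the realizability of $\underline{h}_{T-t}$ as a definition (the worst-case CMP is \emph{designed} to absorb into $\underline{s}$), so no further argument is needed there.
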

\begin{proof}
	The worst-case CMP $\underline{\cmp}$ is designed such that the agent cannot recover from taking a sub-optimal action $a_t \in \Aspace \setminus \{ a^* \}$ as it is absorbed by a worst-case state given the trajectory $h_t$. A worst-case state is one that maximizes the visitation frequency in $h_t$, \ie $\underline{s} \in \argmax_{s \in \Sspace} d_{h_t} (s)$, so that the visitation frequency becomes increasingly unbalanced. A sub-optimal action at the first step in $\underline{\cmp}$ leads to $T - 1$ visits to the initial state $s_0 \sim \mu$, and the final entropy is zero.
\end{proof}

\begin{lemma}[Best-Case CMP]
	For any $t$-step trajectory $h_t \in \Hspace_{[T]}$, taking a sub-optimal action $a \in \Aspace \setminus \{ a^* \}$ at step $t$ in the best-case CMP $\overline{\cmp}$ gives a final entropy
	\begin{equation*}
		H^*_2 = \max_{h \in \Hspace_{T - t} \setminus \Hspace_{T - t}^*} H(d_{h_t \oplus h} (\cdot)) 
		\quad s. t. \; \Hspace_{T - t}^* = \argmax_{h \in \Hspace_{T - t}} H(d_{h_t \oplus h} (\cdot))
	\end{equation*}
	where the maximum is attained by
	\begin{equation*}
		\overline{h}_{T - t} = s_{2}^* \oplus \big(  \ h_{T - t - 1}^* \in \Hspace_{T - t - 1}^* \big) \in \argmax_{h \in \Hspace_{T - t} \setminus \Hspace^*_{T - t}} H \big( d_{h_t \oplus h} (\cdot) \big),
	\end{equation*}
	in which $s_{2}^*$ is any state that is the second-closest to a uniform entry in $d_{h_t \oplus \overline{h}_{T - t}}$.
	\label{thr:best_case_cmp}
\end{lemma}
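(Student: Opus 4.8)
The plan is to construct $\overline{\cmp}$ as the exact dual of the worst-case CMP of Lemma~\ref{thr:worst_case_cmp}. There, a sub-optimal action was made maximally damaging by absorbing the agent into the most-visited state, driving the visitation frequency toward maximal imbalance and hence minimal entropy. Here I would instead tune the (deterministic) transitions of $\overline{\cmp}$ so that a sub-optimal action is as \emph{harmless} as possible: it routes the agent to the state that keeps $d_{h_t}$ closest to uniform among the admissible ones, after which the agent is free to follow an optimal remaining path over the last $T-t-1$ steps. The target is to show this best attainable outcome equals $H^*_2$, which in turn yields the lower bound $\lR_{T-t}$ on the regret-to-go in Lemma~\ref{thr:regret_bounds}.

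First I would reduce the claim to a purely combinatorial statement. Since $H\big(d_{h_t \oplus h}(\cdot)\big)$ depends only on the multiset of states visited along $h_t \oplus h$, and not on the actions or the transition kernel, the best reachable final entropy is obtained by choosing the continuation states to balance the visitation counts as much as possible, subject to realizability in $\overline{\cmp}$. By taking $\overline{\cmp}$ deterministic and designing its transitions freely, every desired state sequence becomes realizable, so the only binding constraint is on the first post-$s_t$ state. Here I would invoke Assumption~\ref{ass:unique_action}: if a sub-optimal action could reach the same next state that $a^*$ reaches along an optimal continuation, then that action would itself attain $H^*$ and thus be optimal, contradicting uniqueness of $a^*$. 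Consequently a sub-optimal action can never reach a continuation in $\Hspace_{T-t}^* = \argmax_{h} H(d_{h_t \oplus h})$, which caps the attainable entropy at $H^*_2 = \max_{h \in \Hspace_{T-t} \setminus \Hspace_{T-t}^*} H(d_{h_t \oplus h})$.

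It then remains to prove the cap is tight and to characterize the maximizer as $\overline{h}_{T-t} = s_2^* \oplus h_{T-t-1}^*$. My approach is a greedy water-filling argument: because $H$ is strictly concave in the visitation counts, the highest-entropy non-optimal continuation concedes as little as possible on the one forced step, visiting the state $s_2^*$ that is second-closest to equalizing the frequency (the best one being excluded), and then completes the trajectory with an optimal $(T-t-1)$-step continuation $h_{T-t-1}^* \in \Hspace_{T-t-1}^*$ from $h_t \oplus s_2^*$. I would then fix the transitions of $\overline{\cmp}$ to realize exactly this path after the sub-optimal action, so its final entropy equals $H^*_2$ and matches the cap.

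The hard part will be the greedy-optimality step: showing rigorously that ``second-best first state, then optimal remainder'' dominates \emph{every} other non-optimal continuation, including those that keep the optimal first state but deviate later. I expect to need an exchange argument grounded in the concavity of $H$ (decreasing marginal gains from visiting an under-visited state), plus care with the self-referential definition of $s_2^*$ as the state second-closest to a uniform entry in the \emph{resulting} distribution $d_{h_t \oplus \overline{h}_{T-t}}$. I would make the latter precise by characterizing $s_2^*$ through the incremental effect of the first visit on the count vector and then verifying that this description coincides with the one in terms of the final distribution at the optimum.
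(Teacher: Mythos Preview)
Your proposal is correct and follows essentially the same approach as the paper: design $\overline{\cmp}$ as the dual of the worst-case construction so that the sub-optimal action sends the agent to the second-best next state $s_2^*$ and then back onto an optimal remaining path, with Assumption~\ref{ass:unique_action} ruling out attaining $H^*$. The paper's own proof is in fact a brief design sketch that asserts the form of the maximizer without the greedy/concavity justification you flag as the ``hard part,'' so your proposal is already more detailed than what appears there.
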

\begin{proof}
	The best-case CMP $\overline{\cmp}$ is designed such that taking a sub-optimal action $a \in \Aspace \setminus \{ a^* \}$ at step $t$ minimally decreases the final entropy. Especially, instead of reaching at step $t + 1$ an optimal state $s^*$, \ie a state that maximally balances the state visits of the final trajectory, the agent is drawn to the second-to-optimal state $s_{2}^*$, from which it gets back on track on the optimal trajectory for the remaining steps. Note that visiting $s_{2}^*$ cannot lead to the optimal final entropy, achieved when $s^*$ is visited at step $t + 1$, due to the sub-optimality of action $a$ at step $t$ and Assumption~\ref{ass:unique_action}.
\end{proof}

\paragraph{Instantaneous Regret}
Although the objective~\eqref{eq:finite_samples_entropy} is non-additive across time steps, we can still define a notion of \emph{pseudo-instantaneous regret} by comparing the regret-to-go of two subsequent time steps. In the following, we provide the definition of this expected pseudo-instantaneous regret along with lower and upper bounds to the regret suffered by an optimal Markovian policy.
\begin{definition}[Expected Pseudo-Instantaneous Regret]
	Consider a policy $\pi \in \Pi$ interacting with a CMP over $T - t$ steps starting from the trajectory $h_t$. We define the expected pseudo-instantaneous regret of $\pi$ at step $t$ as $r_t (\pi) := \max \big( 0, \R_{T - t} (\pi, h_t) - \R_{T - t - 1} (\pi, h_{t + 1}) \big)$.
	\label{thr:regret}
\end{definition}
\begin{restatable}[]{corollary}{instantaneousRegretBounds}
    \label{thr:instantaneous_regret_bounds}
	Let $\pi_{\M} \in \Pim$ be a Markovian policy such that $\pi_{\M} \in \argmax_{\pi \in \Pim} \mathcal{E} (\pi)$ on a CMP $\cmp$. Then, for any $h_t \in \Hspace_{[T]}$, it holds $\underline{r}_t (\pi_{\M}) \leq r_t (\pi_{\M}) \leq \overline{r}_t (\pi_{\M})$ such that
	\begin{align*}
		\underline{r}_t (\pi_{\M}) &= \max \Big( 0, \ H^* \big( \V_t (\pi_{\M}) - \V_{t + 1} (\pi_{\M}) \big) - H^*_2 \V_t (\pi_{\M}) + H_{*} \V_{t + 1} (\pi_{\M}) \Big), \\
		\overline{r}_t (\pi_{\M}) &= \max \Big( 0, \ H^* \big( \V_t (\pi_{\M}) - \V_{t + 1} (\pi_{\M}) \big) - H_* \V_t (\pi_{\M}) + H^*_2 \V_{t + 1} (\pi_{\M}) \Big),
	\end{align*}
	where
	\begin{equation*}
		\V_t (\pi_{\M}) := \frac{1}{ \pi_{\M} (a^*| s_t)} \Var_{hs_t \sim p^{\pi_{\NM}}_t} \big[ \EV \big[ \mathcal{B} ( \pi_{\NM} (a^* | hs_t) ) \big] \big].
	\end{equation*}
\end{restatable}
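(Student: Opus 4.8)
The plan is to obtain Corollary~\ref{thr:instantaneous_regret_bounds} as an immediate consequence of Lemma~\ref{thr:regret_bounds}, applied at the two consecutive horizons $T-t$ and $T-t-1$, combined with the fact that the truncation $x \mapsto \max(0,x)$ is monotone nondecreasing. First I would rewrite the regret-to-go bounds of Lemma~\ref{thr:regret_bounds} in the compact notation of the statement. Since $\V_t (\pi_{\M}) = \frac{1}{\pi_{\M}(a^*|s_t)} \Var_{hs_t \sim p^{\pi_{\NM}}_t}[ \EV[ \mathcal{B}(\pi_{\NM}(a^*|hs_t))]]$, the lemma reads $\lR_{T-t}(\pi_{\M}) = (H^* - H^*_2)\V_t(\pi_{\M})$ and $\uR_{T-t}(\pi_{\M}) = (H^* - H_*)\V_t(\pi_{\M})$, and identically at the shorter horizon with $\V_{t+1}$ in place of $\V_t$.

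Next I would bound the inner difference appearing in Definition~\ref{thr:regret}, namely $\R_{T-t}(\pi_{\M}, h_t) - \R_{T-t-1}(\pi_{\M}, h_{t+1})$. For the lower bound I lower-bound the first term and upper-bound the subtracted one, giving $\R_{T-t} - \R_{T-t-1} \geq \lR_{T-t}(\pi_{\M}) - \uR_{T-t-1}(\pi_{\M}) = (H^*-H^*_2)\V_t(\pi_{\M}) - (H^*-H_*)\V_{t+1}(\pi_{\M})$, which rearranges to $H^*(\V_t(\pi_{\M}) - \V_{t+1}(\pi_{\M})) - H^*_2 \V_t(\pi_{\M}) + H_* \V_{t+1}(\pi_{\M})$, exactly the argument of the $\max$ in $\underline{r}_t(\pi_{\M})$. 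Symmetrically, upper-bounding the first term and lower-bounding the subtracted one yields $\R_{T-t} - \R_{T-t-1} \leq \uR_{T-t}(\pi_{\M}) - \lR_{T-t-1}(\pi_{\M}) = H^*(\V_t(\pi_{\M}) - \V_{t+1}(\pi_{\M})) - H_* \V_t(\pi_{\M}) + H^*_2 \V_{t+1}(\pi_{\M})$, which matches the argument of the $\max$ in $\overline{r}_t(\pi_{\M})$. Applying $\max(0,\cdot)$ to this chain, and using its monotonicity, transfers the inequalities to $r_t(\pi_{\M})$ and delivers $\underline{r}_t(\pi_{\M}) \leq r_t(\pi_{\M}) \leq \overline{r}_t(\pi_{\M})$.

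The step requiring the most care is justifying that Lemma~\ref{thr:regret_bounds} applies cleanly at horizon $T-t-1$: I would check that $h_{t+1}$ is a genuine one-step extension of $h_t$ and that $\pi_{\M}$ remains an optimal Markovian policy, so that the extremal values $H^*$, $H^*_2$, $H_*$ and the worst/best-case constructions of Lemmas~\ref{thr:worst_case_cmp} and~\ref{thr:best_case_cmp} are the relevant ones at the shorter horizon as well. Crucially, because the four bounds are combined purely additively and we never demand that the lower and upper bounds be attained by the same CMP, no consistency between the extremal CMPs at steps $t$ and $t+1$ is needed; the whole argument therefore reduces to the substitution above plus the monotonicity of the truncation at zero, with no further regret decomposition required.
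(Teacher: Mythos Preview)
Your proposal is correct and follows essentially the same route as the paper: rewrite the bounds of Lemma~\ref{thr:regret_bounds} as $\lR_{T-t}(\pi_{\M}) = (H^*-H^*_2)\V_t(\pi_{\M})$ and $\uR_{T-t}(\pi_{\M}) = (H^*-H_*)\V_t(\pi_{\M})$, then bound the difference $\R_{T-t}-\R_{T-t-1}$ below by $\lR_{T-t}-\uR_{T-t-1}$ and above by $\uR_{T-t}-\lR_{T-t-1}$, and finally apply the truncation $\max(0,\cdot)$. Your write-up is in fact more explicit than the paper's on two points the paper leaves implicit: the monotonicity of $\max(0,\cdot)$ that transfers the chain of inequalities to $r_t$, and the check that Lemma~\ref{thr:regret_bounds} is invoked legitimately at the shorter horizon $T-t-1$.
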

\begin{proof}
	From the Definition~\ref{thr:regret}, we have that $r_t (\pi_{\M}) = \R_{T - t} (\pi_{\M}) - \R_{T - t - 1} (\pi_{\M})$. Recall that
	\begin{align*}
		&\lR_{T - t} (\pi) = \V_{t} (\pi) \big( H^* - H^*_2 \big), 
		&\uR_{T - t} (\pi) = \V_{t} (\pi) \big( H^* - H_* \big),
	\end{align*}
	from Lemma~\ref{thr:regret_bounds}. Then, we can write
	\begin{align*}
		\underline{r}_t (\pi_{\M}) &\geq \lR_{T - t} (\pi_{\M}) - \uR_{T - t - 1} (\pi_{\M}) = H^* \big( \V_t (\pi_{\M}) - \V_{t + 1} (\pi_{\M}) \big) - E^*_2 \V_t (\pi_{\M}) + H_* \V_{t + 1} (\pi_{\M}), 
	\end{align*}
	and
	\begin{align*}
		\overline{r}_t (\pi_{\M}) &\leq \uR_{T - t} (\pi_{\M}) - \lR_{T - t - 1} (\pi_{\M}) = H^* \big( \V_t (\pi_{\M}) - \V_{t + 1} (\pi_{\M}) \big) - H_* \V_t (\pi_{\M}) + H^*_2 \V_{t + 1} (\pi_{\M}).
	\end{align*}
\end{proof}

\subsection{Proofs of Section~\ref{sec:complexity_analysis}}
\label{apx:proofs_complexity_analysis}

\npHardness*

\begin{proof}
To prove the theorem, it is sufficient to show that there exists a problem $\Psi_c \in$ NP-hard so that $\Psi_c \leq_p \Psi_0$. We show this by reducing 3SAT, a well-known NP-complete problem, to $\Psi_0$. To derive the reduction we consider two intermediate problems, namely $\Psi_1$ and $\Psi_2$. Especially, we aim to show that the following chain of reductions hold:
    \begin{equation*}
        \Psi_0 \geq_m \Psi_1 \geq_p \Psi_2 \geq_p \text{3SAT}
    \end{equation*}
First, we define $\Psi_1 $ and we prove that $\Psi_0 \geq_m \Psi_1$. Informally, $\Psi_1$ is the problem of finding a reward-maximizing Markovian policy $\pi_{\M} \in \Pi_{\M}$ w.r.t. the entropy objective~\eqref{eq:finite_samples_entropy} encoded through a reward function in a convenient POMDP $\epomdp$. We can build $\epomdp$ from the CMP $\cmp$ similarly as the extended MDP $\emdp$ (see Section~\ref{sec:preliminaries} and the proof of Lemma~\ref{thr:optimal_deterministic} for details), except that the agent only access the observation space $\widetilde{\Omega}$ instead of the extended state space $\widetilde{\Sspace}$. In particular, we define $\widetilde{\Omega} = \Sspace$ (note that $\Sspace$ is the state space of the original CMP $\cmp$),  $\widetilde{O} (\widetilde{o} | \widetilde{s}) = \widetilde{s}_{-1}$,
and the reward function $\widetilde{R}$ assigns value 0 to all states $\widetilde{s} \in \widetilde{S}$ such that $|\widetilde{s}| \neq T$, otherwise (if $|\widetilde{s}| = T$) the reward corresponds to the entropy value of the state visitation frequences induced by the trajectory codified through $\widetilde{s}$.

Then, the reduction $\Psi_0 \geq_m \Psi_1$ works as follows. We denote as $\mathcal{I}_{\Psi_i}$ the set of possible instances of problem $\Psi_i$. We show that $\Psi_0$ is harder than $\Psi_1$ by defining the polynomial-time functions $\psi$ and $\phi$ such that any instance of $\Psi_1$ can be rewritten through $\psi$ as an instance of $\Psi_0$, and a solution $\pi_{\NM}^* \in \Pinm$ for $\Psi_0$ can be converted through $\phi$ into a solution $\pi_{\M}^* \in \Pim$ for the original instance of $\Psi_1$.
    \[ \begin{tikzcd}
    \mathcal{I}_{\Psi_1} \arrow{r}{\psi} & \mathcal{I}_{\Psi_0} \arrow{d} \\%
    \pi_{\M}^* & \arrow{l}{\phi} \pi_{\NM}^*
    \end{tikzcd}
    \]
The function $\psi$ sets $\Sspace = \widetilde{\Omega}$ and derives the transition model of $\cmp$ from the one of $\epomdp$, while $\phi$ converts the optimal solution of $\Psi_0$ by computing 
\begin{equation}
    \pi_{\M}^* (a | o, t) = \sum_{ho \in \Hspace_o} p^{\pi_{\NM}^{*}}_t (ho) \pi_{\NM}^* (a|ho) 
\end{equation} 
where $\Hspace_o$ stands for the set of histories $h \in \Hspace_{t}$ ending in the observation $o \in \Omega$.
Thus, we have that $\Psi_0 \geq_m \Psi_1$ holds. We now define $\Psi_2$ as the policy existence problem w.r.t. the problem statement of $\Psi_1$. Hence, $\Psi_2$ is the problem of determining whether the value of a reward-maximizing Markovian policy $\pi_{\M}^* \in \argmax_{\pi \in \Pim} \mathcal{J}_{\epomdp} (\pi)$ is greater than 0. Since computing an optimal policy in POMDPs is in general harder than the relative policy existence problem ~\citep[][Section 3]{lusena2001complexity}, we have that $\Psi_1 \geq_p \Psi_2$.

For the last reduction, \ie $\Psi_2 \geq_p \text{3SAT}$, we extend the proof of Theorem 4.13 in \cite{mundhenk2000complexity}, which states that the policy existence problem for POMDPs is $\NP$-complete. In particular, we show that this holds within the restricted class of POMDPs defined in $\Psi_1$.

The restrictions on the POMDPs class are the following:
\begin{enumerate}
    \item The reward function $R(s)\geq 0$ only in the subset of states reachable in T steps, otherwise $R(s)=0$;
    \item $|\widetilde{\Sspace}|=\widetilde{S}=|\widetilde{\Omega}|^T$.
\end{enumerate}
Both limitations can be overcome in the following ways:
\begin{enumerate}
    \item It suffices to add states with deterministic transitions so that $T=m\cdot n$ can be defined a priori, where T is the number of steps needed to reach the state with positive reward through every possible path. Here $m$ is the number of clauses, and $n$ is the number of variables in the 3SAT instance, as defined in \citep{mundhenk2000complexity};
    \item The POMDPs class defined by $\Psi_1$ is such that $\widetilde{S}=|\widetilde{\Omega}|^T$. Noticing that the set of observations corresponds with the set of variables and that from the previous point $T=m\cdot n$, we have that $|\widetilde{\Omega}|^T = n^{m\cdot n}$, while the POMDPs class used by the proof hereinabove has $\widetilde{S}=m\cdot n^2$. Notice that $n \geq 2$ and $m \geq 1$ implies that $n^{m\cdot n} \geq m\cdot n^2$. Moreover, notice that every instance of 3SAT has $m\geq 1$ and $n\geq 3$. Hence, to extend the proof to the POMDPs class defined by $\Psi_1$ it suffices to add a set of states $\widetilde{S}_p$ s.t. $R(s)=0 \; \forall s\in \widetilde{S}_p$.
\end{enumerate}
Since the chain $\Psi_0 \geq_m \Psi_1 \geq_p \Psi_2 \geq_p \text{3SAT}$ holds, we have that $\Psi_0 \geq_p \text{3SAT}$. Moreover, since $ \text{3SAT} \in \NP$-complete, we can conclude that $\Psi_0$ is $\NP$-hard.
\end{proof}


\end{document}